\title{Stronger Calibration Lower Bounds via Sidestepping\thanks{We would like to thank Dean P.\ Foster for bringing to our attention this calibration perspective on online predictions as well as the problem of proving super-$\sqrt{T}$ calibration lower bounds, and for his comments and suggestions on an earlier draft of this paper. This work was supported by NSF Awards CCF-1704417 and AF-1813049, DOE Award DE-SC0019205, and ONR Young Investigator Award N00014-18-1-2295.}}
\date{}
\author{Mingda Qiao}
\author{Gregory Valiant}
\affil{\texttt{\{mqiao,valiant\}@stanford.edu}}
\affil{Stanford University}
\newcommand{\A}{\mathcal{A}}    
\newcommand{\Ber}{\mathrm{Ber}}
\newcommand{\calerr}{\mathrm{calerr}}   
\newcommand{\Ecover}{\event^{\textrm{cover}}}   
\newcommand{\Enegl}{\event^{\textrm{negl}}}     
\newcommand{\Epoch}{\mathsf{Epoch}}
\newcommand{\Etruth}{\event^{\textrm{truth}}}   
\newcommand{\event}{\mathcal{E}}    
\newcommand{\Ex}[2]{\operatorname*{\mathbb{E}}_{#1}\left[#2\right]}  
\newcommand{\game}{\textsf{Sign-Preservation}}      
\newcommand{\I}[1]{\mathbb{I}\left[#1\right]}       
\newcommand{\Int}{\mathcal{I}}      
\newcommand{\maxerr}{\mathrm{maxerr}}       
\newcommand{\opt}{\mathrm{opt}}     
\newcommand{\poly}{\operatorname*{poly}}    
\newcommand{\pr}[1]{\Pr\left[#1\right]}     
\newcommand{\SPinner}{\mathsf{SP}^{\textrm{inner}}}
\newcommand{\SPouter}{\mathsf{SP}^{\textrm{outer}}}
\newcommand{\Tact}{T^{\mathrm{actual}}}     
\newtheorem{theorem}{Theorem}
\newtheorem{definition}[theorem]{Definition}
\newtheorem{lemma}[theorem]{Lemma}
\newtheorem{proposition}[theorem]{Proposition}
\newtheorem{example}{Example}
\begin{document}

\maketitle

\begin{abstract}
    We consider an online binary prediction setting where a forecaster observes a sequence of $T$ bits one by one. Before each bit is revealed, the forecaster predicts the probability that the bit is $1$. The forecaster is called well-calibrated if for each $p \in [0, 1]$, among the $n_p$ bits for which the forecaster predicts probability $p$, the actual number of ones, $m_p$, is indeed equal to $p \cdot n_p$. The \emph{calibration error}, defined as $\sum_p |m_p - p n_p|$, quantifies the extent to which the forecaster deviates from being well-calibrated. It has long been known that an $O(T^{2/3})$ calibration error is achievable even when the bits are chosen adversarially, and possibly based on the previous predictions. However, little is known on the lower bound side, except an $\Omega(\sqrt{T})$ bound that follows from the trivial example of independent fair coin flips.
    
    In this paper, we prove an $\Omega(T^{0.528})$ bound on the calibration error, which is the first super-$\sqrt{T}$ lower bound for this setting to the best of our knowledge. The technical contributions of our work include two lower bound techniques, \emph{early stopping} and \emph{sidestepping}, which circumvent the obstacles that have previously hindered strong calibration lower bounds. We also propose an abstraction of the prediction setting, termed the \game{} game, which may be of independent interest.  This game has a much smaller state space than the full prediction setting and allows simpler analyses. The $\Omega(T^{0.528})$ lower bound follows from a general reduction theorem that translates lower bounds on the game value of \game{} into lower bounds on the calibration error.
\end{abstract}

\newpage

\section{Introduction}
We study the following online binary prediction problem. A forecaster predicts a binary sequence of length $T$ that is observed one bit at a time. Before seeing each bit, the forecaster makes a prediction about the probability that this bit is a ``1''. For simplicity, we require all the predictions to fall in a finite set $P \subset [0, 1]$ specified by the forecaster at the beginning. At the end of the $T$ time steps, the \emph{calibration error}\footnote{More generally, the $\ell_q$ calibration error is defined as $\left(\sum_{p\in P}\frac{n_p(T)}{T}\left|\frac{m_p(T)}{n_p(T)} - p\right|^q\right)^{1/q}$ in the literature. Up to a factor of $T$, the definition in~\eqref{eq:calerr} coincides with the $\ell_1$ calibration error, which is also called the expected calibration error (ECE).} incurred by the forecaster is defined as
\begin{equation}\label{eq:calerr}
    \calerr(T) \coloneqq
    \sum_{p \in P}|m_p(T) - n_p(T)\cdot p|,
\end{equation}
where $n_p(T)$ denotes the number of times probability $p$ is predicted up to time $T$, and $m_p(T)$ is the number of ones observed among those $n_p(T)$ time steps. Thus, the calibration error quantifies the extent to which the forecaster's predictions are well-calibrated in the sense that for every possible prediction value $p$, the frequency of $1$ among the time steps at which $p$ is predicted is indeed close to $p$.

The notion of calibration is incomparable to other usual performance metrics such as prediction accuracy.  Particularly when predicting potentially noisy binary outcomes, it is difficult to establish good benchmarks for prediction accuracy, as it is generally impossible to distinguish between noise in the observations, versus a failure of the predictor.  By contrast, calibration is a natural desiderata that applies whether or not the observations have intrinsic noise.
Being well-calibrated can be viewed as a minimum requirement on the forecaster for its predictions to be interpreted as meaningful probabilities: Suppose that among all the days on which a weather forecast predicted a $50\%$ chance of rain, it rained on only $10\%$ of the days. The predictions of such weather forecasts clearly lack credibility.  

There has been a recent surge of interest in calibration, both from the machine learning community (e.g.,~\cite{kuleshov2015calibrated,guo2017calibration,kumar2019verified,zhao2020individual}), and from the perspective of algorithmic fairness (e.g.,~\cite{kleinberg2017inherent,pleiss2017fairness,hebert2018calibration,romano2019malice,shabat2020sample,jung2020moment}).  In the machine learning setting, this study is motivated in part by the fact that trained neural networks are often spectacularly poorly calibrated and overconfident in their predictions.  The connection between calibration and fairness is especially natural: as proposed in \cite{kleinberg2017inherent}, a fair classifier should be calibrated on every protected demographic group. Indeed, predictions that are not well-calibrated for some demographic groups would seem to conflict with the most intuitive notion of what it means to treat all groups fairly.

The calibration aspect of online predictions was first studied by Foster and Vohra~\cite{foster1998asymptotic}. They gave a randomized forecaster that achieves an $O(T^{2/3})$ calibration error in expectation, even if the $T$ bits are chosen by an adaptive adversary that chooses the $t$-th bit based on the bits and predictions in the previous $t-1$ steps. This $O(T^{2/3})$ upper bound has a simple non-constructive proof due to Sergiu Hart based on the minimax theorem~\cite[Section 4]{foster1998asymptotic}: For each fixed finite set $P$, each deterministic adaptive strategy of the forecaster (resp.\ the adversary) can be viewed as a function that maps $\bigcup_{t=0}^{T-1}(\{0,1\}^t\times P^t)$ to $P$ (resp.\ $\{0, 1\}$), so there are only finitely many such strategies. Thus, by the minimax theorem, it suffices to prove the following claim: Against any given adversary (which might be randomized and adaptive), there is a forecaster that achieves an $O(T^{2/3})$ calibration error in expectation. This claim, in turn, can be proved using the following ``rounding strategy'': (1) choose $P = \left\{0, \frac{1}{T^{1/3}}, \frac{2}{T^{1/3}}, \ldots, 1\right\}$; (2) at each time step $t$, compute the probability of the event $b(t) = 1$ conditioning on the previous $t-1$ steps (using the knowledge of the given adversary), and predict the value in $P$ that is closest to this conditional probability. We refer the readers to~\cite{hart2020calibrated} for further details of this proof.

On the other hand, less is understood on the lower bound side. The only known lower bound on the calibration error is $\Omega(\sqrt{T})$, which can be proved using a simple adversary that outputs $T$ independent and uniformly distributed random bits. In this case, the optimal strategy is to predict probability $1/2$ at every step $t$. Then, the calibration error $\calerr(T)$ reduces to $|m_{1/2}(T) - T/2|$, where $m_{1/2}(T)$ follows the binomial distribution $B(T, 1/2)$, and this error is $\Omega(\sqrt{T})$ in expectation. Unfortunately, there is no known scheme of the adversary that outperforms the trivial one (that outputs independent coin flips) and gives a $\omega(\sqrt{T})$ bound on the  calibration error, not to mention an $\Omega(T^{2/3})$ bound that matches the best known upper bound.

\subsection{The Prediction Setting}
The binary prediction setting is formally defined as a two-player multi-stage game between a forecaster and an adversary. The forecaster first specifies a finite set $P \subset [0, 1]$ from which the predictions are selected. At each time step $t = 1, 2, \ldots, T$, the forecaster chooses $p(t) \in P$ and the adversary chooses $b(t) \in \{0, 1\}$ simultaneously. Both choices may depend on the previous $t-1$ steps but not the other player's action at time $t$. For each $p \in P$, let $n_p(t) \coloneqq \sum_{i=1}^{t}\I{p(i) = p}$ denote the number of times that $p$ is predicted by the forecaster during the first $t$ time steps, and let $m_p(t) \coloneqq \sum_{i=1}^{t}\I{p(i) = p \wedge b(i) = 1}$ denote the number of time steps at which $p$ is predicted and the bit chosen by the adversary is $1$. Then, the cumulative calibration error up to time $t$ is defined as
\[
    \calerr(t) \coloneqq
    \sum_{p \in P}|m_p(t) - n_p(t)\cdot p|.
\]
Define $\Delta_p(t) \coloneqq m_p(t) - n_p(t)\cdot p$ as the total bias associated with prediction value $p$ after the first $t$ time steps. Moreover, let $\Delta^{+}_p(t) \coloneqq \max(\Delta_p(t), 0)$ and $\Delta^{-}_p(t) \coloneqq \max(-\Delta_p(t), 0)$ denote the positive and negative parts of $\Delta_p(t)$. Then, $\calerr(t)$ can be equivalently written as
\[
    \calerr(t)
=   \sum_{p \in P}|\Delta_p(t)|
=   \sum_{p \in P}\Delta^{+}_p(t) + \sum_{p \in P}\Delta^{-}_p(t).
\]
For each quantity that is labeled by a time step (e.g., $\calerr(t)$ and $\Delta_p(t)$), we may omit the parameter $t$ (and simply write, e.g., $\calerr$ and $\Delta_p$) if it can be inferred from the context. In particular, we will drop the notation $t$ when describing a scheme of the adversary, since the time $t$ is naturally given by the time step at which that statement is executed.

It should be noted that the finiteness assumption on $P$ is not too restrictive and is standard in the literature (e.g.,~\cite{foster1998asymptotic}). This assumption can be justified by real-world prediction scenarios such as weather forecasts, where the probability of precipitation is typically rounded to $5\%$ or $10\%$ increments. Moreover, we can verify that rounding each prediction $p(t)$ to the nearest multiple of $1/T$ would increase $\calerr(T)$ by at most an additive constant. Thus, it is without loss of generality to take $P = \{0, 1/T, 2/T, \ldots, 1\}$.

\subsection{Obstacles Against Strong Lower Bounds}\label{sec:obstacle}
Recall that an $O(T^{2/3})$ upper bound can be proved by analyzing a forecaster that predicts the nearest multiple of $1/T^{1/3}$ to the conditional probability that the next bit is $1$. Suppose that the adversary divides the time horizon $T$ into $k = T^{1/3}$ ``epochs'' of length $T/k$, and outputs $T/k$ independent samples from the Bernoulli distribution $\Ber(i/k)$ in the $i$-th epoch. Then, the forecaster with the rounding strategy would keep predicting probability $i/k$ in the $i$-th epoch, and the expected calibration error is given by
\[
    \Ex{}{\calerr(T)}
=   \sum_{i=1}^{k}\Ex{X\sim B(T/k, i/k)}{|X - (T/k)\cdot(i/k)|}
=   \Omega(k\cdot \sqrt{T/k}) = \Omega(T^{2/3}),
\]
where $B(\cdot, \cdot)$ denotes the binomial distribution. This indicates that the analysis of the $O(T^{2/3})$ upper bound is tight.  Assuming that the forecaster is ``truthful'' in the sense that its prediction is always equal to (or very close to) the conditional probability of the next bit, the above example also suggests an $\Omega(T^{2/3})$ lower bound for all such truthful forecasters. 

Unfortunately, this truthfulness assumption on the forecaster does not always hold; in various scenarios the forecaster, to minimize the calibration error, has an incentive to make untruthful predictions that are far away from the true probabilities. In the following, we describe several such scenarios, including \emph{coarse-grained binning} and \emph{cover-up}, that make the construction of lower bound schemes highly nontrivial.

The first example shows that, while the above construction proves the tightness of the upper bound analysis, there exists another simple forecaster that achieves a small error on it.
\begin{example}[Coarse-grained binning]\label{ex:coarse}
    Let us revisit the case that the binary sequence consists of $T/k$ independent samples from each of $\Ber(1/k), \Ber(2/k), \ldots, \Ber(k/k)$ for $k = T^{1/3}$. Note that the sum of the $T$ bits has an expectation of $T \cdot \frac{k+1}{2k}$ and an $O(T)$ variance. Therefore, if the forecaster predicts $\frac{k+1}{2k}$ at each of the $T$ steps, the resulting calibration error is $O(\sqrt{T})$ in expectation.
\end{example}

In Example~\ref{ex:coarse}, while we expect the forecaster to put the $T$ bits into $k$ ``bins'' associated with probabilities $1/k, 2/k, \ldots, k/k$ faithfully and incur an $\Omega(T^{2/3})$ error, the forecaster would actually merge all these bins into a larger, coarse-grained bin corresponding to probability $\frac{k+1}{2k}$. More generally, as long as the $T$ bits are independently drawn with fixed probabilities $p^*_1, p^*_2, \ldots, p^*_T$, the forecaster may as well predict the average $\frac{1}{T}\sum_{t=1}^{T}p^*_t$ at every single time step, resulting in $\Ex{}{\calerr(T)} = O(\sqrt{T})$.

\begin{example}[Cover-up]\label{ex:cover}
    Suppose that the sequence consists of $T/3$ uniformly random bits followed by $T/3$ ones and then $T/3$ zeros. Moreover, suppose that the forecaster predicts $1/2$ in each of the first $T/3$ steps. Then, the calibration error after the first $T/3$ steps is $\Omega(\sqrt{T})$ in expectation. However, the forecaster can always ``cover up'' this error using the subsequent bits: If the first $T/3$ bits contain more zeros than ones, the forecaster may keep predicting $1/2$ (even though the bits are known to be ones) until $m_{1/2}(t) = n_{1/2}(t)/2$ at some point $t$. Similarly, the forecaster may cover up the error by predicting $1/2$ during the last $T/3$ time steps, if ones outnumber zeros among the first $T/3$ bits.
\end{example}

In Example~\ref{ex:cover}, the forecaster can always achieve a zero calibration error by untruthfully predicting $1/2$ for bits that are known to be deterministic. While the example might appear a bit contrived, this phenomenon that a forecaster can strategically decrease the cumulative calibration error by predicting untruthfully is not uncommon. Foster and Hart~\cite{foster2020forecast} refer to such behavior as ``backcasting'' (in contrast to forecasting), in the sense that the forecaster makes use of the future outcomes to disguise the mistakes it has made in the past.

The following example, termed ``forecast hedging'' in~\cite{foster2020forecast}, indicates that the cover-up scenario is universal and makes it difficult to prove strong calibration lower bounds.

\begin{example}[Forecast hedging]\label{ex:hedging}
    Suppose that at time $t$, it holds that $\Delta_{p_1}(t) \le -1$ and $\Delta_{p_2}(t) \ge 1$ for some $p_1 < p_2$. We claim that the forecaster can decrease the calibration error in expectation after the next time step (i.e., ensure that $\Ex{}{\calerr(t+1)} \le \calerr(t)$) by predicting either $p_1$ or $p_2$, each with probability $1/2$.
    
    To see this, first suppose that the next bit is $0$. Then, with probability $1/2$, $\Delta_{p_1}$ is decreased by $p_1$, and $\calerr$, which contains a $|\Delta_{p_1}|$ term, will be increased by $p_1$; with the remaining probability $1/2$, $\Delta_{p_2}$ is decreased by $p_2$, and $\calerr$ also drops by $p_2$. In expectation, the cumulative calibration error drops by $\frac{p_2 - p_1}{2} > 0$. A similar analysis works for the case that the next bit is $1$. Thereby, the forecaster can cancel out part of the previous error by randomizing between the two predictions $p_1$ and $p_2$, without taking into account the actual probability of the next bit. 
\end{example}

\subsection{Our Results}\label{sec:result}
The main result of this paper is the first super-$\sqrt{T}$ lower bound on the calibration error in the online binary prediction setting.

\begin{theorem}\label{thm:main}
    Let $\alpha = \frac{\log 8}{\log 255}$, $\beta = \frac{\log(9/2)}{\log 255}$, and $c = \frac{2\beta + 1}{\alpha + 2\beta + 2} > 0.528$.
    There exists a scheme of the adversary such that every forecaster incurs an $\Omega(T^c/\sqrt{\log T}) = \Omega(T^{0.528})$ calibration error in expectation.
\end{theorem}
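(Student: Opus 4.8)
The plan is not to argue about prediction-setting forecasters directly — the coarse-grained binning, cover-up, and forecast-hedging phenomena of Examples~\ref{ex:coarse}--\ref{ex:hedging} make such arguments intractable — but to (i) isolate the combinatorial core of the obstruction in an auxiliary two-player game, the \game{} game, whose state space is small enough to reason about; (ii) prove a \emph{reduction theorem} showing that a lower bound of $g$ on the value of a size-$s$ instance of \game{} yields a prediction adversary forcing $\Ex{}{\calerr(T)} = \Omega(g)$ for some $T = \poly(s)$; and (iii) lower-bound the value of \game{} by a recursive construction of the game adversary. The exponent $c$ then falls out of solving the recursion in (iii) and composing with the polynomial loss in (ii).

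\textbf{The game and the reduction.} I would set up \game{} so that one player acts as a ``coarsened forecaster'' that controls which bin currently carries positive versus negative accumulated bias, while the \game{}-adversary tries to build a large bias in some bin and then \emph{preserve its sign}. Defining the game on a coarse, fixed collection of bins is what makes it immune to the coarse-grained-binning trick of Example~\ref{ex:coarse}. To prove the reduction, I would take an arbitrary forecaster and simulate it inside the game, using two devices to keep the simulation sound. \emph{Early stopping}: the prediction adversary produced by the reduction does not run for the full horizon but commits to halting at a randomized time, with a distribution chosen so that, with constant probability, the forecaster has not had enough remaining bits to cover up the bias it has already accumulated — this neutralizes the backcasting of Example~\ref{ex:cover}. \emph{Sidestepping}: the \game{}-adversary keeps a large pool (of size on the order of $255$) of fresh, untouched sub-instances in reserve, so that whenever the forecaster hedges by splitting probability mass across two bins of opposite current bias (Example~\ref{ex:hedging}), the adversary abandons the contested bins and resumes on an untouched sub-instance, where that hedge has bought the forecaster nothing; since a hedge that helps one bin must expose the forecaster elsewhere, some such sub-instance always exists.

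\textbf{Recursive lower bound on the game value.} I would build the \game{}-adversary recursively: a size-$s$ instance is realized as a nested family of sub-instances whose size parameters shrink geometrically (the shrink factor is where the $8$ in $\alpha = \log 8/\log 255$ enters), played in sequence; a pigeonhole argument over the forecaster's allocation of effort shows that on at least one sub-instance the forecaster falls behind, and the adversary recurses there, multiplying the forced one-signed bias by a factor of about $9/2$ (where the $9/2$ in $\beta = \log(9/2)/\log 255$ enters), with the $255$-fold branching being exactly the reserve needed to implement sidestepping. The trivial fair-coin sub-instance contributes a base bias $\Omega(\sqrt{s'})$ on a sub-instance of size $s'$, seeding the recursion; solving it — balancing the geometric gain in bias against the geometric loss in size, against this $\sqrt{\cdot}$ floor — gives game value $\Omega(s^{c'})$, and composing $c'$ with the $T = \poly(s)$ relation of the reduction yields the stated exponent $c = \frac{\beta+1}{\alpha+2\beta+1}$, which a direct numerical check shows exceeds $0.528$. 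The $\sqrt{\log T}$ loss is the price of a union bound over the $O(\log T)$ recursion levels.

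\textbf{Main obstacle.} The crux — and the reason prior work stalled at the trivial $\sqrt{T}$ bound — is making \game{} simultaneously \emph{faithful} and \emph{recursively tractable}. Faithfulness demands that a forecaster exploiting coarsening, cover-up, and hedging all at once still induces a \game{}-player no better than the worst case, and it is this demand that forces the delicate choice of the early-stopping distribution and of the sidestepping rule. Recursive tractability demands that \game{} be self-similar with a per-level improvement whose resulting exponent is strictly larger than what the binomial baseline alone yields. Getting both from a single construction — in particular, guaranteeing that the adversary pays only a bounded factor per level to defeat hedging while still driving the forecaster into a genuinely harder sub-instance — is where essentially all of the work lies; the remainder is bookkeeping over the recursion together with routine concentration estimates.
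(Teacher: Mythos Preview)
Your high-level architecture --- abstract to \game{}, prove a reduction from game-value lower bounds to calibration lower bounds, and lower-bound the game value recursively --- is exactly the paper's. But several of the concrete mechanisms you describe are misidentified, and as written would not implement the argument.

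The main misreading is of \emph{sidestepping}. It is not a rule for ``abandoning contested bins'' when the forecaster hedges, and the $255$ is not a reserve pool for that purpose. Sidestepping is simply the adversary's rule for choosing the next epoch's probability $p^*_{i+1}$ on the side of $p^*_i$ that matches the sign of the accumulated bias $\Delta_{p^*_i}$, so that the forecaster cannot later push that bias back toward zero. The \game{} game encodes precisely this: Player~F's placed sign records the sign of $\Delta_{p^*_i}$, and the sign is ``preserved'' exactly when no later epoch lands on the wrong side. Hedging is not defeated by a separate device; it is defeated because sidestepping prevents the simultaneous existence of exploitable opposite-sign biases.

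Relatedly, the constants $255$, $8$, $9/2$ live entirely inside the combinatorial game analysis and have nothing to do with coin-flip biases. The paper shows $\opt(2^t-1,t)=t$ by a binary-search strategy for Player~A, and a tensorization $\opt(a^s,b^s)\ge((c+1)/2)^s$ from $\opt(a,b)\ge c$ (Lemma~\ref{lem:opt-bound}). Taking $t=8$ gives $\opt(255,8)=8$ and hence $\opt(255^s,8^s)\ge(9/2)^s$; so $255=2^8-1$ is the cell count for depth-$8$ binary search, $8$ is the round count, and $9/2=(8{+}1)/2$ is the per-level gain in \emph{preserved signs}, not in bias. The $\sqrt{T/k^\alpha}$ per-epoch bias and the $\sqrt{\log T}$ loss enter only in the reduction (Theorem~\ref{thm:reduction}); the latter comes from the anti-concentration step in Lemma~\ref{lem:negligible}, not from a union bound over recursion levels.
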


The proof of Theorem~\ref{thm:main} builds on two simple yet powerful lower bound techniques tailored to calibration error, termed as \emph{early stopping} and \emph{sidestepping}, that manage to overcome the obstacles discussed in Section~\ref{sec:obstacle}.

\paragraph{Early stopping.} To prevent the forecaster from putting all bits into a single coarse-grained bin as in Example~\ref{ex:coarse}, we observe that to do this, the forecaster would likely encounter a large $\calerr(t)$ in the middle of the time horizon. For instance, suppose that the forecaster predicts $\frac{k+1}{2k}$ at every time step in Example~\ref{ex:coarse}. Then, the calibration error $\calerr(t)$ reduces to $|\Delta_p(t)| = |m_p(t) - n_p(t)\cdot p|$ for $p = \frac{k+1}{2k}$. Since each of the first $T/4$ bits has an expectation of at most $1/4$, the expected sum of these bits, $\Ex{}{m_p(T/4)}$, is at most $T/16$. On the other hand, $n_p(T/4)\cdot p = pT/4 \ge T/8$. It follows that $\calerr(T/4)$ will be as large as $\Omega(T)$ in expectation. Then, if the adversary deviates from the above construction and keeps outputting zeros in the remaining $3T/4$ time steps, $\calerr(T)$ will also be large.

This observation motivates the following ``early stopping'' trick: instead of directly lower bounding $\calerr(T)$, it suffices to show that $\calerr(t)$ is large at some step $t \in [T]$. Formally, define $\maxerr(t) \coloneqq \max_{t'\in[t]}\calerr(t')$ as the maximum cumulative error encountered during the first $t$ steps. The following proposition states that a high-probability lower bound on $\maxerr(T)$ implies the existence of another scheme that gives a high-probability lower bound on $\calerr(T)$.
\begin{proposition}\label{prop:early-stop}
    Suppose that for $B, p > 0$, there exists a scheme $\A$ of the adversary that spans at most $T$ time steps such that $\pr{\maxerr(\Tact) \ge B} \ge p$ holds for any forecaster, where random variable $\Tact$ denotes the number of steps that $\A$ actually lasts. Then, there also exists a scheme that lasts exactly $T$ time steps such that $\pr{\calerr(T) \ge B/2} \ge p$ for any forecaster.
\end{proposition}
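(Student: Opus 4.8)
The plan is to run $\A$ while monitoring the running calibration error, and to ``freeze'' the adversary into a deterministic tail as soon as the error first reaches $B$. Concretely, the new scheme simulates $\A$ step by step; right after the step at which $\calerr$ first becomes at least $B$ --- call this time $\tau$, and note $\tau \le \Tact \le T$ whenever it occurs --- it stops consulting $\A$ and instead outputs a fixed bit in every remaining step, chosen as follows. Since $\sum_{p}\Delta^{+}_p(\tau) + \sum_{p}\Delta^{-}_p(\tau) = \calerr(\tau) \ge B$, at least one of $\sum_{p}\Delta^{+}_p(\tau) \ge B/2$ and $\sum_{p}\Delta^{-}_p(\tau) \ge B/2$ holds; in the former case the scheme outputs $1$ forever after, and in the latter it outputs $0$. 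If $\A$ terminates without $\calerr$ ever reaching $B$, the scheme pads the remaining steps with zeros. By construction the scheme lasts exactly $T$ steps (recall $\Tact \le T$ always).

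The key observation is that this frozen tail locks in half of the error no matter how the forecaster reacts. Suppose the tail consists of ones. Then at every subsequent step, whichever value $p$ the forecaster predicts, $\Delta_p$ changes by $1 - p \ge 0$ while all other $\Delta_{p'}$ are unchanged; hence every $\Delta_p$, and therefore every $\Delta^{+}_p$, is nondecreasing from time $\tau$ onward. Consequently $\sum_{p}\Delta^{+}_p(T) \ge \sum_{p}\Delta^{+}_p(\tau) \ge B/2$, so $\calerr(T) \ge \sum_{p}\Delta^{+}_p(T) \ge B/2$. The case of an all-zero tail is symmetric with $\Delta^{-}_p$ in place of $\Delta^{+}_p$: every $\Delta_p$ is nonincreasing, so $\sum_{p}\Delta^{-}_p(T) \ge \sum_{p}\Delta^{-}_p(\tau) \ge B/2$, and again $\calerr(T) \ge B/2$.

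It remains to argue that the freeze is triggered with probability at least $p$ against any forecaster. Fix a forecaster and couple the run of the new scheme with a hypothetical run of the original $\A$ against the same forecaster, using the same internal randomness for $\A$ and for the forecaster. Up to the freezing time $\tau$ the two runs are identical, since before freezing the new scheme merely executes $\A$ and the forecaster's moves depend only on the history seen so far. Therefore the new scheme freezes (at some time $\tau \le \Tact$) exactly when the $\A$-run satisfies $\maxerr(\Tact) \ge B$, an event of probability at least $p$ by hypothesis; and whenever it does, the previous paragraph yields $\calerr(T) \ge B/2$. Hence $\pr{\calerr(T) \ge B/2} \ge p$, as desired.

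The one step that needs care is this coupling argument, which is why it is essential to trigger the freeze at the \emph{first} time $\calerr$ reaches $B$ --- so that the new scheme provably still agrees with $\A$ at that moment --- rather than, say, at time $\Tact$, by which point $\calerr$ may well have dropped back below $B/2$ due to cover-up behavior. Everything else follows immediately from the identity $\calerr = \sum_{p}\Delta^{+}_p + \sum_{p}\Delta^{-}_p$ together with the one-sided monotonicity of each $\Delta_p$ under a constant bit stream.
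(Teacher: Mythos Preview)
Your proof is correct and follows exactly the same approach as the paper's: simulate $\A$, freeze at the first time $\calerr \ge B$, and output all ones or all zeros thereafter depending on which of $\sum_p \Delta_p^+$ or $\sum_p \Delta_p^-$ exceeds $B/2$, using the monotonicity of $\sum_p \Delta_p^{\pm}$ under a constant bit stream. You are in fact more explicit than the paper about the padding when $\A$ terminates early and about the coupling that transfers the probability bound from $\A$ to the new scheme, but these are refinements of the same argument rather than a different route.
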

\begin{proof}
    We define another scheme $\A'$ that simulates $\A$. As soon as $\calerr(t_0) \ge B$ holds at some time $t_0$, $\A'$ deviates from $\A$ and computes $\sum_{p\in P}\Delta^+_p(t_0)$ and $\sum_{p\in P}\Delta^-_p(t_0)$. Since the two terms sum up to $\calerr(t_0)$, at least one of the terms is at least $B/2$. If $\sum_{p\in P}\Delta^+_p(t_0) \ge B/2$, scheme $\A'$ keeps outputting $1$ in the remaining $T - t_0$ time steps; otherwise $\A'$ keeps outputting $0$. 
    
    In the former case, $\sum_{p\in P}\Delta^+_p$ will never drop below $\sum_{p\in P}\Delta^+_p(t_0)$, so we have
        \[\calerr(T) \ge \sum_{p\in P}\Delta^+_p(T) \ge \sum_{p\in P}\Delta^+_p(t_0) \ge B/2;\] 
    similarly, $\calerr(T) \ge \sum_{p\in P}\Delta^-_p(T) \ge \sum_{p\in P}\Delta^-_p(t_0) \ge B/2$ holds in the latter case. This shows that $\maxerr(\Tact) \ge B$ when running scheme $\A$ implies that $\calerr(T) \ge B/2$ when running scheme $\A'$, and thus proves the proposition.
\end{proof}

\paragraph{Sidestepping.} To prevent the forecaster from covering up the mistakes in the past, we note that such cover-ups are only possible if the probabilities of the later bits are in the ``right direction'' compared to the signs of $\Delta_p$'s. More concretely, in Example~\ref{ex:cover}, it is crucial that the last $2T/3$ bits contain both zeros and ones for the cover-up to be possible. In contrast, if $\Delta_{1/2}(T/3) > 0$ and the remaining bits are all ones, predicting $1/2$ will only further increase $\Delta_{1/2}$ and thus increase the calibration error.

This motivates us to choose the probabilities in a \emph{sidestepping} way, so that the error incurred by previous predictions cannot be fixed in the future. Suppose the adversary first flips a few fair coins with probability $1/2$. Then, assuming that all of the forecaster's predictions are exactly $1/2$, the adversary switches to another probability based on the sign of $\Delta_{1/2}(t)$. If $\Delta_{1/2}(t) > 0$, the adversary switches to a coin with a larger bias $3/4$, so that if the forecaster keeps predicting $1/2$, $\Delta_{1/2}$ will only further increase in expectation; otherwise, if $\Delta_{1/2} < 0$, the bias is changed to $1/4$ accordingly. Similarly, after tossing the coin with probability $3/4$ for a while, the adversary changes the probability to either $5/8$ or $7/8$ depending on the sign of $\Delta_{3/4}$.

We could repeat the above procedure and choose the probabilities such that cover-ups are not possible. However, as soon as we change the probability of the bit $\Theta(\log T)$ times, all the valid choices of the probability would fall into an interval of length $1/T$, at which point the forecaster can afford to keep predicting the same probability later on, since rounding the predictions to the nearest multiple of $1/T$ only increases the calibration error by an additive $O(1)$ amount. Thus, applying the above scheme verbatim could only force the forecaster into predicting at most $k = O(\log T)$ different values, each corresponding to an epoch with $T/k$ steps. Consequently, the resulting lower bound will be at best $\Omega(\sqrt{T\log T})$, which is not significantly better than the trivial bound.

Nevertheless, the actual construction of the scheme uses a similar strategy based on the idea of sidestepping. The key difference is that, instead of ensuring that the error incurred in \emph{every} epoch cannot be covered up in later epochs, the actual construction only guarantees this for a substantial fraction of the epochs, which also turns out to be sufficient for proving the lower bound.

\subsection{Related Work}
The notion of calibration in the prediction context dates back to at least the 1950s. In the literature of meteorology, Brier~\cite{brier1950verification} suggested that the quality of weather forecasts should be evaluated by comparing the forecast probability of rain and the actual fraction of rainy days among the days on which the probability is predicted. Calibration was later studied by  Dawid~\cite{dawid1982well} from a Bayesian perspective. 

Foster and Vohra~\cite{foster1998asymptotic} studied the online prediction of arbitrary binary sequences from the calibration perspective, and proved the existence of a forecaster that is well-calibrated on any arbitrary binary sequence. While the results in the paper were stated in the asymptotic regime where $T$ tends to infinity, the minimax proof due to Sergiu Hart (\cite[Section 4]{foster1998asymptotic}~and~\cite{hart2020calibrated}) directly implies an $O(T^{2/3})$ upper bound on the calibration error defined in~\eqref{eq:calerr}. The work of Foster and Vohra was later simplified by~\cite{fudenberg1999easier,foster1999proof} and extended to settings where the calibration condition is tested on different subsets of the time horizon~\cite{lehrer2001any,sandroni2003calibration,foster2011complexity}. Vovk~\cite{vovk2007non} further developed this approach and obtained non-asymptotic results.

The notion of calibration has also received increasing attention in the machine learning literature; see, e.g., \cite{kuleshov2015calibrated,guo2017calibration,kumar2019verified,zhao2020individual}. In binary classification, a classifier that maps data points to values in $[0, 1]$ is called well-calibrated if, among the data points on which value $p$ is predicted, the fraction of positive examples is close to $p$. In other words, the outputs of well-calibrated classifiers can be interpreted as the probability that the data points belong to the positive class. One reason for the increased attention on calibration is that trained neural networks typically yield very poorly calibrated models.  This classification setting is different from the online setting studied in this work, since the classifier makes the predictions for all the data points in a single batch. Thus, unlike the discussion in Example~\ref{ex:cover}, it is impossible to cover up the error incurred on certain data points by strategically adjusting the remaining predictions.

Calibration has also been recently studied in the setting of algorithmic fairness ~\cite{kleinberg2017inherent,pleiss2017fairness,hebert2018calibration,romano2019malice,shabat2020sample,jung2020moment}. In this context, a predictor labels each individual from the population with a value in $[0, 1]$, which is intended to be the probability that the individual belongs to a specific class of interest. The calibration criterion proposed by~\cite{kleinberg2017inherent} requires the predictions to be calibrated on a specified family of subsets of the population. When each subset in the family denotes a protected subset of the population, the calibration constraint prevents predictors that are discriminatory across different groups. \cite{hebert2018calibration} introduced another related fairness notion, \emph{multicalibration}, which requires the predictions to be well-calibrated on every subgroup of the population that can be identified computationally.

\subsection{Organization of the Paper}
In the remainder of the paper, we first take a detour and introduce a two-player game called \game{} in Section~\ref{sec:sign}. The \game{} game serves as a simplified abstraction of the sidestepping technique described in Section~\ref{sec:result}. We will state a reduction theorem (Theorem~\ref{thm:reduction}) in Section~\ref{sec:sign} and apply it to derive Theorem~\ref{thm:main}.

The rest of the paper is devoted to the proof of Theorem~\ref{thm:reduction}. In Section~\ref{sec:overview}, we sketch the \emph{sidestepping scheme} of the adversary as well as an idealized analysis of the scheme. We discuss a few challenges towards pinning down the optimal calibration error following our approach along with some other open problems in Section~\ref{sec:discuss}. Finally, in Section~\ref{sec:proof}, we present the scheme formally and then prove Theorem~\ref{thm:reduction}.

\section{Detour: The Sign-Preservation Game}\label{sec:sign}
As a detour, we introduce the following two-player sequential game called \game{}. We name these two players ``player A'' and ``player F'' to emphasize that they are analogous to the adversary and the forecaster in the prediction setting. An instance of \game{} with parameters $k$ and $r$, denoted by $\game(k, r)$, proceeds as follows. At the beginning of the game, there are $k$ empty cells numbered $1, 2, \ldots, k$. The game consists of at most $r$ rounds, and in each round:
\begin{enumerate}
    \item Player A may terminate the game immediately.
    \item Otherwise, player A chooses an empty cell with number $j \in [k]$.
    \item After knowing the value of $j$, player F places a \emph{sign} (either ``$+$'' or ``$-$'') into cell $j$, and cell $j$ is no longer empty.
\end{enumerate}

When the game ends, we examine the signs placed by player F. We say that the sign in cell $j$ is \emph{removed} if either one of the following two holds:
\begin{itemize}
    \item The sign is ``$+$'', and there exists $j' < j$ such that another sign is put into cell $j'$ after this sign is put into cell $j$.
    \item The sign is ``$-$'', and there exists $j' > j$ such that another sign is put into cell $j'$ after this sign is put into cell $j$.
\end{itemize}
If neither condition holds, the sign is said to be \emph{preserved}. Equivalently, a ``$+$'' sign (resp.\ ``$-$'' sign) is preserved if and only if all the subsequent signs are placed in cells with larger (resp.\ smaller) numbers. Player A's goal is to maximize the number of preserved signs, while player F tries to minimize this number.

\subsection{Connection to Binary Prediction}
Define the game value $\opt(k, r)$ as the maximum number of preserved signs in $\game(k, r)$, assuming that both players play optimally. We call a pair of numbers $(\alpha, \beta)$ \emph{admissible} if $\opt(k, k^\alpha)$ is lower bounded by $\Omega(k^\beta)$.

\begin{definition}\label{def:admissible}
    $(\alpha, \beta) \in (0, 1]^2$ is admissible if there exists constant $c_0 > 0$ such that $\opt(k, r) \ge c_0k^{\beta}$ holds for all integers $k \ge 1$ and $r \ge k^{\alpha}$.
\end{definition}

The following reduction theorem connects the \game{} game to the binary prediction setting. We will sketch the proof of Theorem~\ref{thm:reduction} in Section~\ref{sec:overview} and present the full proof in Section~\ref{sec:proof}.

\begin{theorem}\label{thm:reduction}
Suppose that $(\alpha, \beta)$ is an admissible pair. Let $c = \frac{2\beta + 1}{\alpha + 2\beta + 2}$. There exists a scheme of the adversary such that every forecaster incurs an expected calibration error of
\[\Ex{}{\calerr(T)} \ge \Omega(T^c/\sqrt{\log T}).\]
\end{theorem}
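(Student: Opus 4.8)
The plan is to turn the sidestepping picture into an honest reduction: use an optimal strategy for player A in $\game(k,r)$ as the control logic of the adversary, and then optimize the parameters $k,r$ against the time budget. Fix $k,r,L$ with $rL=T$ and $r\ge k^{\alpha}$, and fix $k$ prediction values $p_1<p_2<\cdots<p_k$ packed into a short interval (with a separation to be chosen). The adversary proceeds in $r$ \emph{epochs} of length $L$. In the epoch for round $t$, if player A's strategy would choose cell $j_t$, the adversary plays $L$ i.i.d.\ bits from $\Ber(p_{j_t})$; at the end of the epoch it reads off a sign $s_t\in\{+,-\}$ from the forecaster's play during that epoch — morally $\mathrm{sign}(\Delta_{p_{j_t}})$, or more robustly the sign of the net change over the epoch in a potential built from the $\Delta_p$'s near $p_{j_t}$ — hands $s_t$ to player F's role, and lets player A react. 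The ordering of the $p_j$'s is chosen so that sign preservation in the game means ``this bias can no longer be undone'': if cell $j$ carries a preserved ``$+$'', every later epoch uses a probability $>p_j$, so the positive bias the forecaster was forced to build up around $p_j$ in epoch $t$ can only grow afterwards (predicting $p_j$ then has positive drift, predicting anything above it only makes $\Delta$ there more negative, etc.), and symmetrically for ``$-$''.

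The analysis then rests on two lemmas. First, a \emph{per-epoch lemma}: against any forecaster, conditioned on the history, with constant probability the net bias produced in epoch $t$ is $\Omega(\sqrt{L/\log T})$ in the direction $s_t$. Here one argues a dichotomy — either the forecaster predicts $p_{j_t}$ for a $1/\operatorname*{poly}\log T$ fraction of the epoch, in which case $\Delta_{p_{j_t}}$ does an essentially unbiased walk of variance $\Theta(L)$ whose endpoint sign the forecaster gets to influence, so $s_t$ is a genuine function of its play and $|\Delta_{p_{j_t}}|$ moves by $\Omega(\sqrt{L/\log T})$; or it mostly predicts other values, where the bits $\Ber(p_{j_t})$ have a definite drift and the bias is pushed directly — and the $\sqrt{\log T}$ slack is exactly what one pays to absorb hedging (Example~\ref{ex:hedging}) and the forecaster's freedom to scatter its predictions. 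Second, an \emph{accounting lemma}: the potential is designed so that the biases locked in by distinct preserved signs sit in disjoint parts of $\sum_p|\Delta_p|=\sum_p\Delta_p^+ +\sum_p\Delta_p^-$, so that $\opt(k,r)=\Omega(k^{\beta})$ preserved signs force $\maxerr(\Tact)=\Omega(k^{\beta}\sqrt{L/\log T})$ with constant probability — even after conceding, as the overview warns, that only a constant fraction of the preserved signs can be made genuinely irrecoverable.

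It remains to combine this with early stopping and to choose parameters. Coarse-graining (Example~\ref{ex:coarse}) is handled for free: a forecaster that predicts essentially one value across epochs lets the sidestepping walk drift that value's $\Delta$ monotonically until $\maxerr$ is already $\Omega(k^{\beta}\sqrt{L/\log T})$. So in every case the hypothesis of Proposition~\ref{prop:early-stop} holds with $B=\Omega(k^{\beta}\sqrt{L/\log T})$ and $p=\Omega(1)$; the proposition then gives $\pr{\calerr(T)\ge B/2}=\Omega(1)$, hence $\Ex{}{\calerr(T)}=\Omega(k^{\beta}\sqrt{L/\log T})$. Finally one optimizes: the epoch must be long enough relative to $k$ that the $k$ probabilities can simultaneously be well separated on the scale needed for drifts to beat $\sqrt L$ fluctuations and coarse enough on the scale $1/T$ for the rounding reduction to apply; this pins $L$ down polynomially in $k$ at the value $L=\Theta(k^{2\beta+2})$ that makes the bound balance against $T=rL$ with $r=k^{\alpha}$. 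That forces $T=\Theta(k^{\alpha+2\beta+2})$, i.e.\ $k=\Theta(T^{1/(\alpha+2\beta+2)})$, and then $k^{\beta}\sqrt{L}=\Theta(T^{(2\beta+1)/(\alpha+2\beta+2)})=\Theta(T^{c})$, with the $\sqrt{\log T}$ surviving from the per-epoch lemma — as claimed.

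The main obstacle is the pair of lemmas in the second step, i.e.\ making the epoch-to-sign correspondence rigorous. The forecaster need not cooperate: within an epoch it may scatter its predictions, predict outside the designated interval, or hedge between two values, and across epochs it may coarse-grain or backcast. So the potential and the definition of $s_t$ must be chosen carefully enough that (a) $s_t$ is a genuine function of the forecaster's play, so that player F's strategy in $\game(k,r)$ is well defined and $\opt(k,r)$ really upper bounds the number of preserved signs; (b) the $\Omega(\sqrt{L/\log T})$ per-epoch bias lower bound survives all of these evasions; and (c) a preserved sign contributes a \emph{fresh} $\Omega(\sqrt L)$ to $\sum_p|\Delta_p|$ rather than cancelling against another epoch's contribution. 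Once this bookkeeping and the precise separation/length trade-off are in place, the binomial tail estimates, union bounds, and parameter arithmetic are routine.
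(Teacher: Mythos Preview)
Your architecture is the paper's: epochs driven by an optimal player-A strategy for $\game(k,k^{\alpha})$, a sign $s_t$ read from the forecaster's bias after each epoch, preserved signs summed over disjoint intervals, and Proposition~\ref{prop:early-stop} at the end. But there is a real gap in the per-epoch lemma and, through it, in how you arrive at $L=k^{2\beta+2}$. In the second branch of your dichotomy --- the forecaster ``mostly predicts other values'' --- the drift you invoke lands on $\Delta_{p'}$ for those \emph{other} values $p'$, not on any potential localized near $p_{j_t}$. If the forecaster spends epoch $i$ predicting entirely outside the interval $\Int_i$ around $p^*_i$, then $\sum_{p\in\Int_i}|\Delta_p|$ does not move at all; there is nothing from which to read $s_t$ and nothing to credit to cell $j_t$ in the accounting. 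The paper does not try to rescue this branch inside the per-epoch lemma. It instead calls an epoch \emph{untruthful} when at least half its predictions lie outside $\Int_i$ and proves separately (Lemma~\ref{lem:truthful}) that a \emph{single} untruthful epoch already forces $\maxerr\ge\Omega(L/k)$, since the global signed potential $\sum_{p\le l_i}\Delta_p-\sum_{p\ge r_i}\Delta_p$ drifts by $\Omega(1/k)$ per untruthful prediction. The localized anti-concentration (Lemma~\ref{lem:negligible}) and preserved-sign accounting (Lemma~\ref{lem:covered}) are then invoked only on the event that \emph{every} epoch is truthful.

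This two-case split is exactly what fixes $L$. The lower bound is $\min\bigl(L/k,\;k^{\beta}\sqrt{L/\log T}\bigr)$, and equating the two terms gives $\sqrt{L}\asymp k^{\beta+1}$, i.e.\ $L=\Theta(k^{2\beta+2})$. The constraint you actually state --- that per-step drifts of order $1/k$ must dominate $\sqrt{L}$ fluctuations --- is only $L\gtrsim k^{2}$; under that constraint alone, maximizing $k^{\beta}\sqrt{L}$ subject to $T=k^{\alpha}L$ would drive you to $L=k^{2}$ and exponent $(1+\beta)/(\alpha+2)$, which exceeds $c$ whenever $\beta>\alpha/2$ and is not attainable precisely because your per-epoch lemma fails in the untruthful branch. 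So the value $L=k^{2\beta+2}$ you write down is correct, but it is forced by the untruthful-epoch lemma you have not identified, not by the separation-versus-fluctuation trade-off you cite.
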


Note that the exponent $c = \frac{2\beta + 1}{\alpha + 2\beta + 2}$ is strictly greater than $1/2$ if and only if $\beta > \alpha / 2$. In the remainder of this section, we will prove the existence of such an admissible pair $(\alpha, \beta)$ with $\beta > \alpha / 2$ and then use it to prove Theorem~\ref{thm:main}.

\subsection{Lower Bounding the Game Value}
The following lemma gives two lower bounds on the optimal game value $\opt(\cdot, \cdot)$. The first states that Player A could make all signs preserved on $\game(k, \log k)$. The second states a ``tensorization'' property of the game, which allows us to lower bound a series of $\opt(\cdot, \cdot)$ given $\opt(k, r)$ for some specific $k$ and $r$.

\begin{lemma}\label{lem:opt-bound}
For any integer $t \ge 1$,
\begin{enumerate}
    \item $\opt(2^t - 1, t) = t$;
    \item $\opt(a, b) \ge c \ge 1$ implies $\opt(a^t, b^t) \ge \left(\frac{c+1}{2}\right)^t$.
\end{enumerate}
\end{lemma}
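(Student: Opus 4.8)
The plan is to prove the two parts independently.

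\textbf{Part 1 (the value of $\game(2^t-1,t)$).} The upper bound $\opt(2^t-1,t)\le t$ is immediate since at most $t$ signs are placed in $t$ rounds, so the work is in the matching lower bound, which I would establish via an explicit bisection strategy for Player A analyzed by induction on $t$. It is cleanest to prove a slightly stronger statement: on any \emph{contiguous} block of at least $2^t-1$ cells, with $t$ rounds available and with the promise that Player A will only pick cells of this block for those $t$ rounds, Player A can force all $t$ elicited signs to be ``block-preserved'' (every ``$+$'' has all later signs of this phase in larger-numbered cells, every ``$-$'' in smaller-numbered cells). In the inductive step Player A first picks the median cell $x$ of the block — legitimate because $|C|\ge 2^t-1=2(2^{t-1}-1)+1$ leaves at least $2^{t-1}-1$ cells on each side — and then, according to whether Player F answers ``$+$'' or ``$-$'', recurses on the upper or lower sub-block (each of size $\ge 2^{t-1}-1$) with $t-1$ rounds left. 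The sign at $x$ stays block-preserved because every later pick is on the correct side (we recursed in that direction), and the remaining $t-1$ signs are block-preserved by the inductive hypothesis. Applying this to the full cell set $[2^t-1]$, itself a contiguous block, with $r=t$ rounds yields $t$ preserved signs, so $\opt(2^t-1,t)=t$.

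\textbf{Part 2 (tensorization).} It suffices to prove a one-step composition statement: if $\opt(a,b)\ge c$ with $c\ge1$ and $\opt(a',b')\ge c'$, then $\opt(aa',bb')\ge \frac{c+1}{2}\,c'$; the lemma then follows by iterating this with $(a',b',c')=(a^{t-1},b^{t-1},((c+1)/2)^{t-1})$, the base case $t=1$ being exactly $c\ge(c+1)/2$. For the composition, partition the $aa'$ cells of $\game(aa',bb')$ into $a$ consecutive ``super-blocks'' $B_1,\dots,B_a$ of $a'$ cells each (in increasing order of cell number). Player A runs the optimal $\game(a,b)$ strategy at the top level; whenever it would pick top-cell $j$, Player A instead enters $B_j$ and runs the optimal $\game(a',b')$ strategy inside it, using at most $b'$ actual rounds and hence at most $bb'$ rounds in total. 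After that sub-game, at least $c'$ of the signs placed in $B_j$ are preserved \emph{within $B_j$}; Player A then reports to the top-level strategy the majority sign among these internally-preserved signs, so that sign's class accounts for at least $c'/2$ of them. By optimality of the top-level strategy, at least $c$ of the reported signs are preserved in the top game.

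The observation that makes the bound come out to $\frac{c+1}{2}c'$ rather than only $\frac{c}{2}c'$ is two-fold. First, the \emph{last} super-block $B_{j^*}$ that Player A enters always has its reported sign preserved at the top level (no later top-cell), and since nothing at all is placed after its sub-game, \emph{all} of its $\ge c'$ internally-preserved signs are preserved in the full game. Second, for any other top-cell $j$ whose reported sign is preserved, say ``$+$'': every later pick lies in a super-block of index $>j$, hence in a cell larger than every cell of $B_j$, so each of the $\ge c'/2$ internally-preserved ``$+$'' signs of $B_j$ has all subsequent signs to its right and is preserved in the full game (and symmetrically for ``$-$''). Summing over the $\ge c$ preserved top-cells — one of them $B_{j^*}$ contributing $\ge c'$, the other $\ge c-1$ contributing $\ge c'/2$ each — gives $c'+(c-1)\cdot\frac{c'}{2}=\frac{c+1}{2}c'$. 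I expect the main obstacle to be precisely this bookkeeping: carefully checking that ``preserved within a super-block'' together with ``all later activity is on the correct side of that super-block'' really does imply ``preserved in the full game'', and treating the last-entered super-block separately so as to capture the extra $+1$.
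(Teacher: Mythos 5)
Your proof is correct and takes essentially the same approach as the paper's: a bisection strategy for Part 1, and for Part 2 the same decomposition into $a$ contiguous super-cells with a simulated outer $\game(a,b)$ instance, majority-sign reporting from each completed inner game, and the bookkeeping that the last-entered preserved super-cell contributes all $\ge c'$ of its internally preserved signs while the other $\ge c-1$ preserved super-cells each contribute $\ge c'/2$, giving $c' + (c-1)c'/2 = \frac{c+1}{2}c'$. The only cosmetic difference is that you state a slightly more general one-step composition ($\opt(aa',bb') \ge \frac{c+1}{2}c'$) and then iterate, whereas the paper performs the induction on $t$ directly.
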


Another useful fact is the monotonicity of $\opt(k, r)$ in both $k$ and $r$.

\begin{lemma}\label{lem:opt-mono}
    For any $1 \le k_1 \le k_2$ and $1 \le r_1 \le r_2$, $\opt(k_1, r_1) \le \opt(k_2, r_2)$.
\end{lemma}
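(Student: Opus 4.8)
The plan is to establish the two monotonicity statements — in the round budget $r$ and in the number of cells $k$ — separately, and then compose them. Throughout, recall that $\opt(k,r)$ is the value of a finite perfect-information zero-sum game, so to lower bound $\opt(k_2, r_2)$ it suffices to exhibit a single strategy for player A that guarantees the desired number of preserved signs against every strategy of player F.

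\emph{Monotonicity in $r$.} I would argue that extra rounds can only help player A, since player A always retains the option of terminating at the start of a round. Given an optimal strategy for $\game(k_1, r_1)$, player A uses the identical strategy in $\game(k_1, r_2)$ with $r_2 \ge r_1$, and terminates the game no later than the end of round $r_1$. The sequence of cell choices and player F's responses, and hence the collection of preserved signs, are exactly the same as in $\game(k_1, r_1)$, so $\opt(k_1, r_1) \le \opt(k_1, r_2)$.

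\emph{Monotonicity in $k$.} Here I would embed the smaller board into the larger one. Fix $k_1 \le k_2$ and any $r$, and have player A play $\game(k_2, r)$ while restricting all of its cell choices to $\{1, 2, \ldots, k_1\}$, following an optimal strategy for $\game(k_1, r)$. Since player F only ever places a sign into a cell chosen by player A, the cells $k_1+1, \ldots, k_2$ remain empty for the entire game, so every sign is placed in $\{1,\ldots,k_1\}$. The crucial observation is that the ``preserved/removed'' status of a sign in a cell $j \le k_1$ depends only on which cells receive signs after it and whether those cells are below or above $j$; as all such cells lie in $\{1,\ldots,k_1\}$, this status is identical to the one in the corresponding play of $\game(k_1, r)$. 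Thus player A preserves exactly as many signs as it would in $\game(k_1, r)$, giving $\opt(k_1, r) \le \opt(k_2, r)$.

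Composing the two inequalities, $\opt(k_1, r_1) \le \opt(k_1, r_2) \le \opt(k_2, r_2)$, where the second step applies the $k$-monotonicity with round budget $r_2$. I do not anticipate a genuine obstacle here; the only point needing a little care is the invariance claim used in the $k$-step — that padding the board with high-numbered cells that are never played cannot change which signs get removed — and this is immediate from the definition, since both removal conditions quantify only over cells that actually receive a later sign.
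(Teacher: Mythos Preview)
Your proposal is correct and follows essentially the same idea as the paper: embed the smaller instance into the larger one by having player A restrict to cells $1,\ldots,k_1$ and terminate early, noting that the preserved/removed status of every sign is unaffected. The only difference is cosmetic---the paper handles both parameters in a single simulation argument, whereas you split it into an $r$-step and a $k$-step and then compose.
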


Lemmas \ref{lem:opt-bound}~and~\ref{lem:opt-mono} are proved in Appendix~\ref{sec:deferred-sign}.

\subsection{Proof of Theorem~\ref{thm:main}}

The first part of Lemma~\ref{lem:opt-bound} alone does not give any admissible pairs, because on an instance with $k$ cells, the number of preserved signs is at most $O(\log k) = o(k^{\beta})$ for any $\beta > 0$. However, when combined with the second part of Lemma~\ref{lem:opt-bound}, it indeed yields a non-trivial admissible pair, which in turn proves the lower bound in Theorem~\ref{thm:main}.

\begin{proof}[Proof of Theorem~\ref{thm:main}]
    Applying the first part of Lemma~\ref{lem:opt-bound} with $t = 8$ gives $\opt(255, 8) = 8$. Then, the second part of Lemma~\ref{lem:opt-bound}, together with the trivial case $\opt(1, 1) = 1$, implies that $\opt(255^t, 8^t) \ge (9/2)^t$ for any integer $t \ge 0$. Let $\alpha = \frac{\log 8}{\log 255}$ and $\beta = \frac{\log(9/2)}{\log 255}$. We will prove in the following that $(\alpha, \beta)$ is admissible. Then, Theorem~\ref{thm:main} would directly follow from Theorem~\ref{thm:reduction}.
    
    Fix $k \ge 1$, $r \ge k^{\alpha}$, and let $t = \lfloor\frac{\log k}{\log 255}\rfloor$. We have $k \ge 255^t$ and $t > \frac{\log k}{\log 255} - 1$. Furthermore, $r \ge k^{\alpha} \ge 255^{\alpha t} = 8^t$. By Lemma~\ref{lem:opt-mono}, we have
    \[
        \opt(k, r)
    \ge \opt(255^t, 8^t)
    \ge (9/2)^t
    > (9/2)^{\frac{\log k}{\log 255} - 1}
    =   \frac{2}{9}k^{\beta}.
    \]
    This shows that $(\alpha, \beta)$ is an admissible pair, and thus proves the theorem.
\end{proof}

\section{Overview of the Proof}\label{sec:overview}
In this section, we sketch a simplified version of the \emph{sidestepping scheme}, which will be used to prove Theorem~\ref{thm:reduction}. We will then explain how the \game{} game captures the essence of the scheme by drawing an analogy between the game and the sidestepping scheme. Finally, we present an idealized analysis that contains most of the key ideas behind the formal proof in Section~\ref{sec:proof}.

\subsection{A Sketch of the Scheme}
The sidestepping scheme is based on the notion of \emph{epochs}. The time horizon $1, 2, \ldots, T$ is divided into $k$ epochs of the same length $T/k$. The scheme chooses a probability $p^*_i$ at the beginning of the $i$-th epoch, and outputs $T/k$ independent samples from $\Ber(p^*_i)$ during this epoch. In the ideal case, we expect the forecaster to keep predicting a probability $\approx p^*_i$ throughout epoch $i$. Then, we would be able to lower bound $\calerr(T)$ by $k \cdot \sqrt{T/k} = \sqrt{Tk}$.

As discussed in Section~\ref{sec:obstacle}, this straightforward construction is vulnerable to untruthful forecasters whose predictions can be far away from the actual probability $p^*_i$. In particular, we need to prevent the forecaster from: (1) merging different epochs into a larger, coarse-grained bin, i.e., by predicting the average of $p^*_i$ at every time step; (2) covering up the errors made in the previous epochs. Resolving the first issue is relatively easier. Suppose that we choose the probabilities $p^*_1$ through $p^*_k$ to be $1/k, 2/k, \ldots, k/k$ and, in some epoch $i$, a significant fraction of the predictions are $(1/k)$-far from the actual probability $p^*_i$. Since each epoch has length $T/k$, these predictions lead to a calibration error of $(T/k)\cdot(1/k) = T/k^2$ in expectation. Then, we will be able to catch this error using the early stopping technique (Proposition~\ref{prop:early-stop}).

Otherwise, suppose that most of the forecaster's predictions are $(1/k)$-close to the true probability $p^*_i$ over epoch $i$. For simplicity, we assume for now that all the $T/k$ predictions are exactly $p^*_i$. Then, standard tail bounds for the binomial distribution imply that we expect an error of $|\Delta_{p^*_i}| \ge \Omega(\sqrt{T/k})$ after epoch $i$. Thus, summing over the $k$ epochs and taking a minimum with $T/k^2$ (the error when the forecaster is untruthful) seems to suggest an lower bound of $\min(T/k^2, \sqrt{Tk})$.

The issue with the above argument is that the forecaster might be able to cover up its error in later epochs, unless the probabilities of the future epochs are all in the right direction. For example, if $\Delta_{p^*_i} > 0$ at the end of epoch $i$, and the probability chosen for the next epoch satisfies $p^*_{i+1} < p^*_i$, the forecaster can decrease $\Delta_{p^*_i}$ by keeping predicting $p^*_i$ in epoch $i+1$, until $\Delta_{p^*_i}$ becomes close to zero. Fortunately, this kind of ``cover-up'' would not be possible if we chose the probabilities such that $p^*_{i'} > p^*_i$ for every $i' = i + 1, i + 2, \ldots, k$. This observation motivates us to choose $p^*_1, p^*_2, \ldots, p^*_k$ more carefully, so that the number of epochs whose $|\Delta_{p^*_i}|$ are preserved at the end of the scheme is maximized.

\subsection{Analogy between Binary Prediction and Sign-Preservation}
The above discussion closely resembles the \game{} game that we defined. In the scheme sketched above, we had $k$ possible choices, $1/k$ through $k/k$, for each $p^*_i$, and the $j$-th largest probability $j/k$ corresponds to the $j$-th cell in \game{}. The $i$-th epoch of the scheme is modeled by the $i$-th round of the game: (1) player A's action of choosing cell $j$ corresponds to the adversary's choice of $p^*_i = j/k$ for epoch $i$; (2) player F's action of placing a ``$+$''/``$-$'' sign can be thought of as getting $\Delta_{p^*_i} > 0$ or $\Delta_{p^*_i} < 0$ at the end of epoch $i$. Finally, a sign gets removed by another sign placed later (if the other sign is on the proper direction), since the error in epoch $i$ could be fixed by a later epoch $i'$, given that the sign of $p^*_{i'} - p^*_i$ is opposite to that of $\Delta_{p^*_i}$. Consequently, the number of epochs whose $|\Delta_{p^*_i}|$ are retained at the end of the scheme is modeled by the number of preserved signs at the end of the \game{} game.

\subsection{Proof Sketch of Theorem~\ref{thm:reduction}}
We sketch a proof of Theorem~\ref{thm:reduction} in the following. Let $(\alpha, \beta)$ be an admissible pair, and $k$ be a parameter to be determined later. In contrast to the scheme described above, we will divide the time horizon into $k^\alpha$ epochs instead, and each epoch has length $T/k^\alpha$.

The adversary simulates an instance of $\game(k, k^\alpha)$ where player A plays optimally. Every time player A chooses a cell with number $j$, the adversary chooses $p^*_i = j/k$ for the next epoch $i$. In other words, the next $T/k^{\alpha}$ bits will be independent samples from $\Ber(j/k)$. 

Within epoch $i$, we say that a prediction given by the forecaster is \emph{untruthful} if the predicted probability is $(1/k)$-far from $p^*_i$. Clearly, the forecaster has to make either $\Omega(T/k^\alpha)$ untruthful predictions, or $\Omega(T/k^\alpha)$ truthful ones. In the former case, we claim that each untruthful prediction increases the cumulative calibration error by an $\Omega(1/k)$ amount, so the total increase throughout this epoch will be at least $\Omega(T/k^{\alpha+1})$. Then, the adversary would be able to catch this $\Omega(T/k^{\alpha+1})$ error using the ``early stopping'' trick.

Otherwise, suppose that epoch $i$ is truthful. Then, lower bounds on binomial tails imply that there will be an $\Omega(\sqrt{T/k^{\alpha}})$ error in expectation after this epoch. Furthermore, this error cannot be significantly reduced if no later epochs is assigned a probability $p^*_{i'}$ with $(p^*_{i'} - p^*_i)\cdot\Delta_{p^*_i} < 0$. Thus, if we choose (on behalf of player F in \game{}) the sign for this cell as the sign of $\Delta_{p^*_i}$, $|\Delta_{p^*_i}|$ will still be $\Omega(\sqrt{T/k^{\alpha}})$ at the end of the scheme as long as the sign placed in this round is preserved at the end of the \game{} game.

Since $(\alpha, \beta)$ is admissible, there will be at least $\Omega(k^{\beta})$ preserved signs, thus giving a lower bound of $\Omega(k^{\beta}\sqrt{T/k^{\alpha}})$.
Taking a minimum with the $\Omega(T/k^{\alpha+1})$ error (in case of an epoch with too many untruthful predictions) and plugging in the optimal choice of $k = T^{\frac{1}{\alpha+2\beta+2}}$ gives the claimed $\Omega(T^c)$ lower bound for $c = \frac{2\beta + 1}{\alpha + 2\beta + 2}$.

\section{Discussion}\label{sec:discuss}
In this section, we discuss a few open directions for further understanding the optimal calibration error that can be achieved in the binary prediction setting.

\paragraph{Gap between upper and lower bounds.} In light of Theorem~\ref{thm:reduction}, an immediate open problem is to find other admissible pairs $(\alpha, \beta)$ that result in a larger exponent $c = \frac{2\beta + 1}{\alpha + 2\beta + 2}$ in the lower bound. In particular, the best possible exponent we can get from Theorem~\ref{thm:reduction} is $c = 3/5$ if $(1, 1)$ is admissible, i.e., $\Omega(k)$ signs can be preserved in a \game{} game with $k$ cells and $k$ rounds. Either proving or disproving this would help us to understand the limit of the approach based on the sidestepping scheme and Theorem~\ref{thm:reduction}.

Another natural open question is whether the $O(T^{2/3})$ upper bound is indeed optimal. In particular, can we construct a better forecaster by proving a converse of Theorem~\ref{thm:reduction} that translates upper bounds on $\opt(\cdot, \cdot)$ into actual strategies for the forecaster? While such a converse is likely to exist when the adversary is epoch-based (and even announces the probability of each epoch at the beginning of that epoch), extending this reduction in the converse direction to more general cases seems challenging.

\paragraph{The power of adaptivity.} Our proof of the lower bound is based on an adaptive scheme for the adversary. More exactly, the scheme uses adaptivity on two different levels: (1) The adversary decides the probability $p^*_i$ of an epoch $i$ based on the gameplay of a \game{} instance. In general, player A of \game{} is allowed to choose the cells adaptively based on the state of the game, which in turn means that $p^*_i$ are chosen adaptively; (2) When we formally prove Theorem~\ref{thm:reduction}, we will need to apply Proposition~\ref{prop:early-stop} to transform the scheme---which is only guaranteed to give a large $\calerr(t)$ at some point $t$---into another scheme with a large $\Ex{}{\calerr(T)}$, and the transformation based on the early stopping trick is inherently adaptive.

Nevertheless, we conjecture that both uses of adaptivity can be replaced by randomization: (1) The lower bounds on $\opt(\cdot, \cdot)$ in Lemma~\ref{lem:opt-bound} can still be achieved (up to a constant factor) in expectation by a non-adaptive yet randomized strategy for player A; (2) The adaptive early stopping strategy in the proof of Proposition~\ref{prop:early-stop} can also be replaced by a randomized one, e.g., that chooses the number of epochs uniformly at random from $1, 2, \ldots, k^\alpha$. Thus, as far as super-$\sqrt{T}$ lower bounds are concerned, adaptivity appears inessential to the adversary, though adaptivity does greatly simplify the analysis of the scheme. Furthermore, it remains an interesting yet challenging open problem to understand whether the extra power brought by the adaptivity increases the calibration error that the optimal forecaster has to incur.

\section{Proof of Theorem~\ref{thm:reduction}}\label{sec:proof}
\subsection{The Sidestepping Scheme}
We formally define the sidestepping scheme in Algorithm~\ref{algo:scheme} and the epochs in the scheme are defined in Algorithm~\ref{algo:epoch}. The core of the scheme is to simulate an instance of the game $\game(k, k^{\alpha})$ for some carefully chosen $k$. In this simulated game, player A plays the optimal strategy while the sidestepping scheme, perhaps paradoxically, plays on behalf of player F. This situation can be best illustrated as Figure~\ref{fig:MIM}, where the sidestepping scheme plays as a ``man-in-the-middle'' and connects an optimal player A for \game{} to the forecaster in the prediction setting.

\begin{figure}[H]
    \centering
    \includegraphics[scale=0.25]{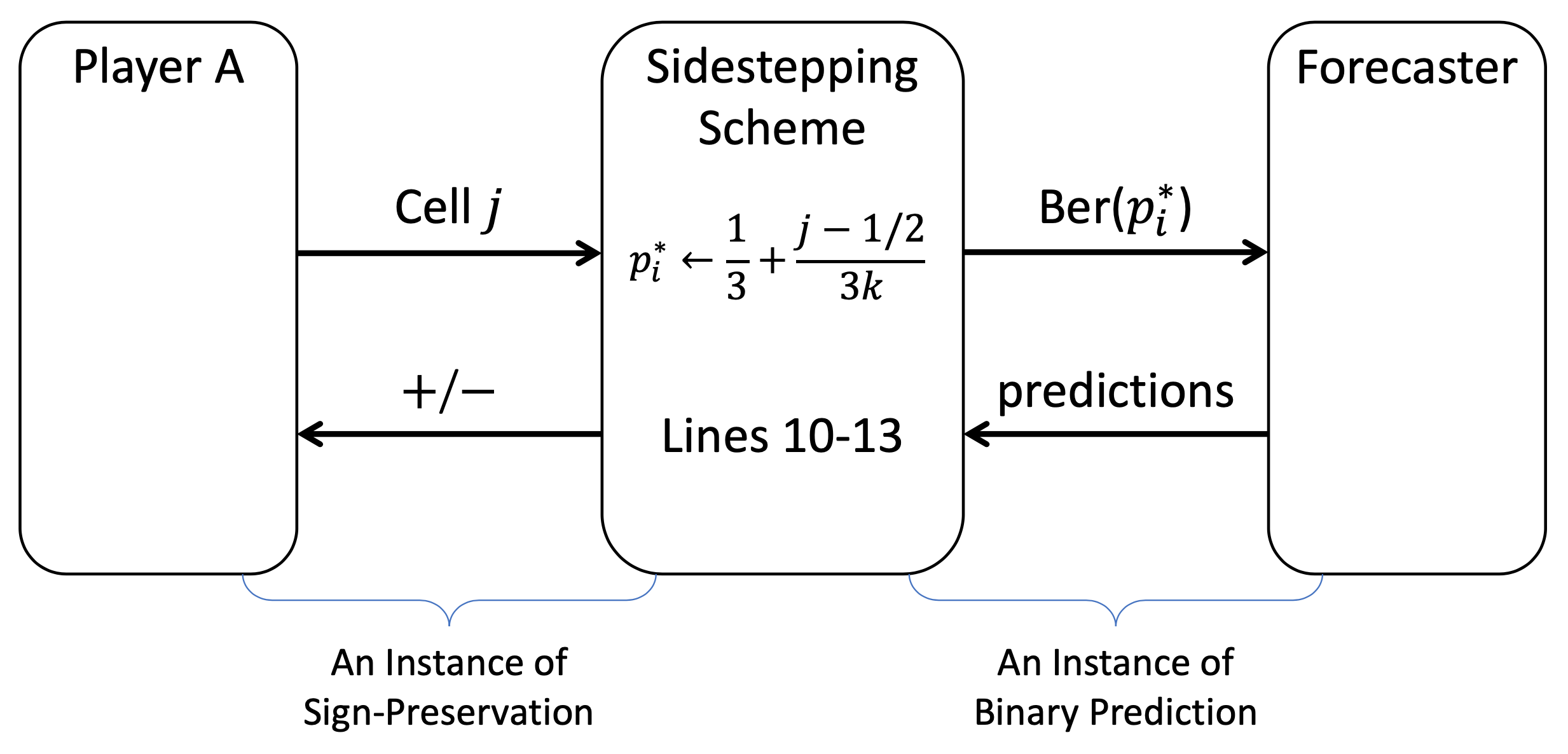}
    \caption{The sidestepping scheme working as a man-in-the-middle. Player A is playing \game{} and the forecaster is in the binary prediction setting from their perspectives.}
    \label{fig:MIM}
\end{figure}

\begin{algorithm2e}[H]\label{algo:scheme}
    \caption{Sidestepping Scheme} \KwIn{Horizon length $T$ and parameters $\alpha, \beta \in (0, 1]$.}
    $k \gets T^{1/(\alpha + 2\beta + 2)}$;
    $\theta \gets \frac{1}{1440}\sqrt{\frac{T}{k^{\alpha}\ln T}}$\;
    Simulate an instance of $\game(k, k^{\alpha})$\;
    \For{$i = 1, 2, \ldots, k^{\alpha}$} {
        \uIf{player A terminates the game in round $i$} {
            \textbf{break}\;
        }
        Let $j \in [k]$ be the cell chosen by player A in round $i$\;
        $\Int_i \gets$ interval $(\frac{1}{3} + \frac{j - 1}{3k}, \frac{1}{3} + \frac{j}{3k})$\;
        $p^*_i \gets \frac{1}{3} + \frac{j-1/2}{3k}$ ; \tcp{the middle point of $\Int_i$}
        Call $\Epoch(T/k^{\alpha}, \Int_i, p^*_i, \theta)$\;
        \uIf{$\sum_{p \in P \cap \Int_i}\Delta^{+}_p \ge \sum_{p \in P \cap \Int_i}\Delta^{-}_p$} {
            Let player F place ``$+$'' into cell $j$\;    
        } \uElse {
            Let player F place ``$-$'' into cell $j$\;
        }
    }
\end{algorithm2e}

\begin{algorithm2e}[H]\label{algo:epoch}
    \caption{$\Epoch(m, \Int, p^*, \theta)$}
    \For{$i = 1, 2, \ldots, m$} {
        \uIf{$\sum_{p \in P\cap\Int}|\Delta_p| \ge \theta$} {
            \textbf{break}\;
        }
        Draw $b \sim \Ber(p^*)$\;
        Output bit $b$\;
    }
\end{algorithm2e}

The scheme differs from the simplified version in Section~\ref{sec:overview} in the following two aspects. First, the probability $p^*_i$ is restricted to the interval $[1/3, 2/3]$ (instead of $[0, 1]$), so that the binomial distribution with parameter $p^*_i$ would have a tail that is lower bounded by Gaussian tails. More specifically, when player A chooses some cell $j$ in the game, we start an epoch associated with probability $p^*_i = \frac{1}{3} + \frac{j - 1/2}{3k}$. Note that $p^*_i$ is exactly the middle point of the $j$-th interval when $[1/3, 2/3]$ is partitioned into $k$ intervals of length $1/(3k)$.

Second, an epoch may span less than $T/k^\alpha$ time steps. In particular, we set a threshold $\theta$ and end an epoch as soon as the interval $\Int$ associated with the epoch already contributes at least $\theta$ to the cumulative error. The purpose of this slight change is mostly to simplify the analysis. As a result, the sidestepping scheme may end before $T$ time steps. In the following, we use random variable $\Tact$ to denote the number of time steps that the scheme actually lasts. At the end of the proof, we will transform the sidestepping scheme into another scheme that spans exactly $T$ steps using Proposition~\ref{prop:early-stop}.

\subsection{Classification of Epochs}
For each possible execution of the sidestepping scheme (Algorithm~\ref{algo:scheme}), we say that an epoch is \emph{untruthful} if the forecaster makes too many predictions that are $\Omega(1/k)$-far away from the actual probability; otherwise it is said to be \emph{truthful}.
\begin{definition}[Untruthful epochs]\label{def:truthful}
    An epoch $i$ associated with interval $\Int_i$ is untruthful if, within epoch $i$, the forecaster makes at least $T/(2k^{\alpha})$ predictions with values outside $\Int_i$.
\end{definition}

We call a truthful epoch \emph{negligible} if, when the epoch ends, the interval associated with it contributes less than $\theta = \frac{1}{1440}\sqrt{\frac{T}{k^{\alpha}\ln T}}$ to the cumulative calibration error at that time; otherwise the epoch is said to be \emph{non-negligible}. By our definition of $\Epoch$ (Algorithm~\ref{algo:epoch}), an epoch is negligible only if it takes exactly $m$ time steps.

\begin{definition}[Negligible epochs]\label{def:negligible}
    A truthful epoch $i$ associated with interval $\Int_i$ is negligible if, when epoch $i$ ends at time step $t$, it holds that $\sum_{p\in P\cap\Int_i}|\Delta_p(t)| < \theta$.
\end{definition}

Finally, for a truthful and non-negligible epoch, we call it \emph{covered} if, at the end of the scheme, its contribution to $\calerr(\Tact)$ is less than $\theta / 4$; otherwise we call it \emph{uncovered}. In other words, an epoch is covered if the predictions in later epochs cover up a significant portion of the mistakes made by the forecaster in the epoch.

\begin{definition}[Covered epochs]\label{def:covered}
    A truthful and non-negligible epoch $i$ associated with interval $\Int_i$ is covered if, when the scheme ends after $\Tact$ time steps, it holds that $\sum_{p\in P\cap\Int_i}|\Delta_p(\Tact)| < \theta / 4$.
\end{definition}

Pictorially, the relation between different classes of epochs defined in Definitions \ref{def:truthful}~through~\ref{def:covered} is demonstrated in Figure~\ref{fig:epochs}.

\begin{figure}[H]
    \centering
    \includegraphics[scale=0.35]{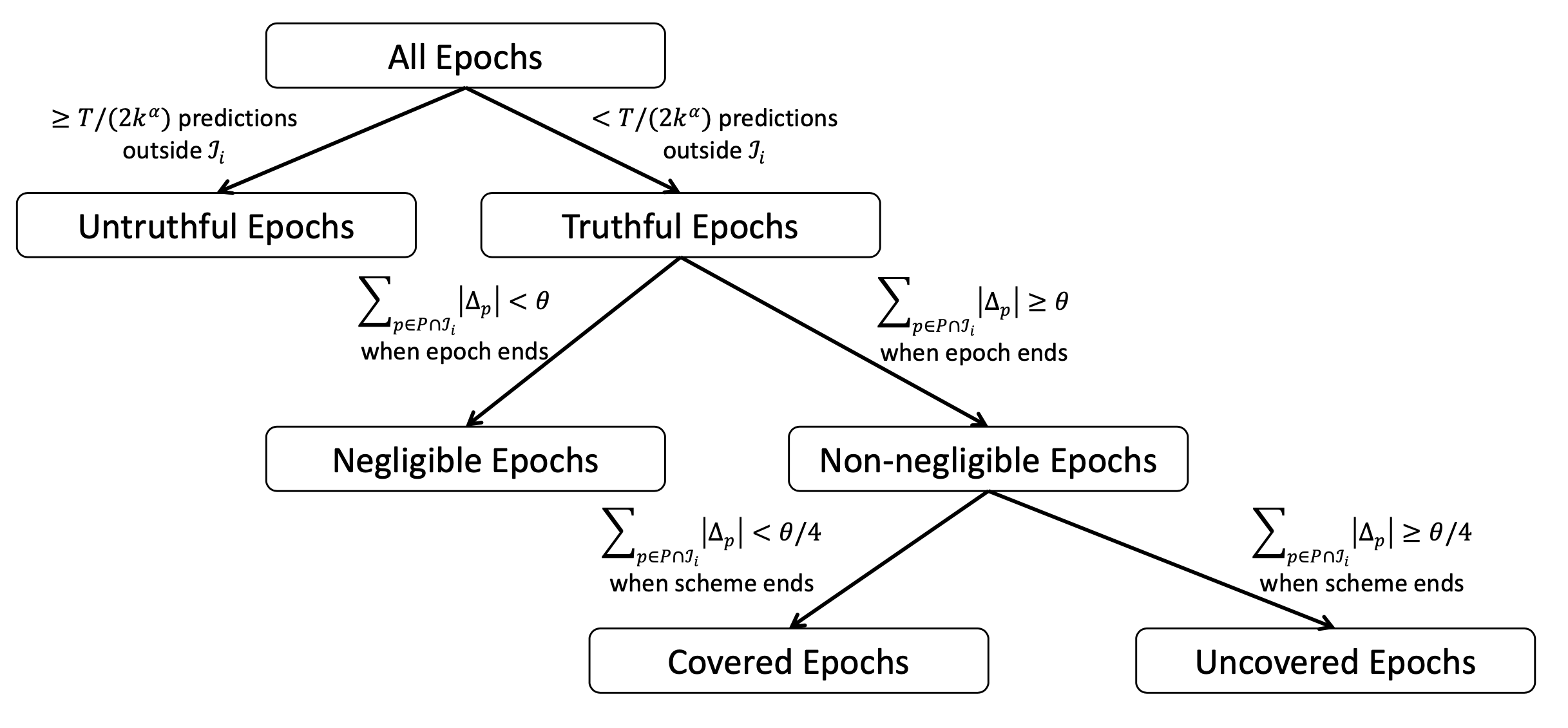}
    \caption{The relation between different classes of epochs.}
    \label{fig:epochs}
\end{figure}

\subsection{Auxiliary Lemmas}
Lemmas \ref{lem:truthful}, \ref{lem:negligible} and \ref{lem:covered} state that the following three hold with high probability: (1) either every epoch is truthful, or $\calerr(t)$ is large at some point $t$; (2) every truthful epoch is non-negligible; (3) for every non-negligible epoch $i$, if the sign placed in round $i$ of \game{} is preserved, epoch $i$ is uncovered.

\begin{lemma}\label{lem:truthful}
    Fix $i \in [k^{\alpha}]$ and let $B \coloneqq \frac{T}{48k^{\alpha + 1}}$. Suppose that epoch $i$ spans the time steps $t_0 + 1, t_0 + 2, \ldots, t_1$. The probability that epoch $i$ is untruthful and $\calerr(t) < B$ holds for every $t = t_0, t_0 + 1, \ldots, t_1$ is at most $\exp(-\Omega(T/k^{\alpha + 2})) = o(1/T)$.
\end{lemma}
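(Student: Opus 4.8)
The plan is to fix, once and for all, the forecaster's and player~A's internal randomness together with the entire execution up through time $t_0$; this pins down the cell $j$ chosen in round $i$, hence the interval $\Int_i = (\ell, u)$ and its midpoint $p^*_i$ (so that $p^*_i - \ell = u - p^*_i = \tfrac{1}{6k}$), and makes the epoch-$i$ bits $b(t_0+1), b(t_0+2), \dots$ an i.i.d.\ $\Ber(p^*_i)$ sequence, while each $p(s)$ stays a function of the conditioned history and $b(t_0+1),\dots,b(s-1)$. Since the bound we are after will be uniform over this conditioning, it suffices to work in the conditioned world.

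The core idea is to control $\calerr$ during epoch $i$ through the single scalar
\[
    Z_t \coloneqq \sum_{p \in P,\ p \le \ell} \Delta_p(t) \;-\; \sum_{p \in P,\ p \ge u} \Delta_p(t).
\]
Two facts about $Z_t$ drive the argument. First, grouping and the triangle inequality give $|Z_t| \le \sum_{p\le\ell}|\Delta_p(t)| + \sum_{p\ge u}|\Delta_p(t)| \le \calerr(t)$, so any lower bound on $|Z_t|$ transfers to $\calerr(t)$. Second, $Z_t$ drifts upward precisely when the forecaster predicts outside $\Int_i$: a step with $p(s)\le\ell$ changes $Z$ by $b(s)-p(s)$, of conditional mean $p^*_i - p(s) \ge \tfrac{1}{6k}$; a step with $p(s)\ge u$ changes $Z$ by $-(b(s)-p(s))$, of conditional mean $p(s)-p^*_i \ge \tfrac{1}{6k}$; and a step with $p(s)\in\Int_i$ leaves $Z$ unchanged. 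I would therefore write $Z_t - Z_{t_0} = M_t + A_t$, where $A_t$ is the accumulated conditional drift (with $A_{t_0}=0$) and $M_t$ a martingale with $M_{t_0}=0$ and increments bounded by $2$ in absolute value; an untruthful epoch, having at least $T/(2k^\alpha)$ out-of-interval predictions, forces $A_{t_1} \ge \tfrac{1}{6k}\cdot\tfrac{T}{2k^\alpha} = \tfrac{T}{12k^{\alpha+1}} = 4B$.

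From here the deduction is short. On the event that epoch $i$ is untruthful and $\calerr(t) < B$ for all $t\in\{t_0,\dots,t_1\}$, the inequalities $|Z_{t_0}|\le\calerr(t_0)<B$ and $|Z_{t_1}|\le\calerr(t_1)<B$ yield $M_{t_1} = Z_{t_1}-Z_{t_0}-A_{t_1} < B + B - 4B = -2B$. Since $t_1 \le t_0 + m$ with $m = T/k^\alpha$ and $t_1$ is a stopping time (it depends only on whether $\sum_{p\in\Int_i}|\Delta_p|\ge\theta$ has been reached), I would apply the Azuma--Hoeffding inequality to $M$ stopped at $t_1$ — a bounded-difference martingale over at most $m$ steps — to get $\pr{M_{t_1} < -2B} \le \exp(-B^2/(2m)) = \exp(-\Omega(T/k^{\alpha+2}))$, using $B = \tfrac{T}{48k^{\alpha+1}}$. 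Finally, since $k = T^{1/(\alpha+2\beta+2)}$ and $\beta>0$, we have $T/k^{\alpha+2} = T^{2\beta/(\alpha+2\beta+2)} = \omega(\log T)$, so the probability is $o(1/T)$, as claimed.

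I expect the only genuinely delicate step to be the choice of the statistic $Z_t$ rather than a more obvious one. Tracking only the too-low predictions, or the signed total $\sum_{p}\Delta_p$ over all prediction values, would let the below-$\Int_i$ and above-$\Int_i$ mistakes cancel, leaving a drift that merely matches the $2B$ of slack we must absorb and hence a useless bound of the form $\Pr[\cdot]\approx\tfrac12$; pairing the two tails with opposite signs inside $Z_t$ makes their drifts reinforce, and tuning constants so that the $4B$ of drift beats $B+B$ of slack with a clean $2B$ margin is what makes the concentration step bite. A secondary point is that $t_1$ is random (the epoch may end early), which is exactly why the martingale tail bound is applied to the stopped process over the deterministic horizon $m = T/k^\alpha$.
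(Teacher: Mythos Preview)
Your proof is correct and follows essentially the same approach as the paper's. Your scalar $Z_t$ is exactly the paper's $\hat\Delta(t)$, and both arguments exploit the same drift $\ge \tfrac{1}{6k}$ per out-of-interval prediction together with $|\hat\Delta|\le\calerr$ and Azuma--Hoeffding; the only cosmetic difference is that the paper subsamples to the $m=T/(2k^\alpha)$ out-of-interval steps and stops at the $m$-th such step $t_2$, whereas you run the Doob decomposition over all epoch-$i$ steps and stop at $t_1$.
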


\begin{lemma}\label{lem:negligible}
    For any fixed $i \in [k^{\alpha}]$, the probability that epoch $i$ is truthful and negligible is at most $T^{-2} = o(1/T)$.
\end{lemma}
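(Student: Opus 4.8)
The plan is to reduce Lemma~\ref{lem:negligible} to an anti-concentration (``confinement is unlikely'') statement for a one-dimensional random walk. Set $m := T/k^{\alpha}$. First note that a negligible epoch must last all $m$ steps: otherwise $\Epoch$ broke early, which by its break condition means $\sum_{p\in P\cap\Int_i}|\Delta_p| \ge \theta$ at that point, so the epoch is not negligible. Consequently, by the same break condition, negligibility forces $\sum_{p\in P\cap\Int_i}|\Delta_p(t)| < \theta$ at \emph{every} one of the time steps spanned by the epoch, not just the last. If in addition the epoch is truthful, at least $m/2$ of its $m$ steps carry a prediction inside $\Int_i$, and one of the two sides of $p^*_i$ contains at least $m/4$ of these; by symmetry assume it is the lower side, and put $B(t) := \sum_{p\in P\cap\Int_i,\ p<p^*_i}\Delta_p(t)$, so that $|B(t)| \le \sum_{p\in P\cap\Int_i}|\Delta_p(t)| < \theta$ throughout the epoch on the event we are bounding.

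Re-index time by the counter $\ell = 1,\dots,N_L$ of ``left steps'' (steps inside $\Int_i$ with predicted value below $p^*_i$), where $N_L \ge m/4$. On the $\ell$-th left step $B$ increases by $b-p$ with $b\sim\Ber(p^*_i)$ and $p\in(\tfrac13+\tfrac{j-1}{3k},\,p^*_i)$; writing $\xi_\ell := b-p^*_i$ (a martingale difference with conditional variance $p^*_i(1-p^*_i)\in[2/9,1/4]$ and $|\xi_\ell|\le 2/3$) and $d_\ell := p^*_i-p\in(0,\tfrac{1}{6k}]$ (predictable), we get $B_\ell = B_0 + G_\ell + D_\ell$, where $G_\ell=\sum_{j\le\ell}\xi_j$ is a martingale with per-step conditional variance $\ge 2/9$ and bounded increments, and $D_\ell=\sum_{j\le\ell}d_j$ is \emph{nondecreasing} and predictable with $D_\ell\le \ell/(6k)$. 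Negligibility ($|B_\ell|<\theta$ for all $\ell$, and $|B_0|<\theta$) therefore pins the martingale $G_\ell$ inside the interval $(-2\theta-D_\ell,\ 2\theta-D_\ell)$ --- width $4\theta$, with a nonincreasing centre $-D_\ell$.

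The core tool, which I would isolate as a lemma, is: a martingale whose increments are bounded by an absolute constant and have conditional variance at least $v$ stays inside a fixed window of width $w$ for $N$ consecutive steps with probability at most $\exp(-\Omega(Nv/w^2))$, provided $w$ exceeds a constant multiple of the increment bound. (Proof idea: cut $[1,N]$ into blocks of length $\Theta(w^2/v)$; inside a block the accumulated conditional variance is $\gtrsim w^2$, so the identity $\Ex{}{G_\sigma^2}=\Ex{}{\text{accumulated variance}}$ at the first exit time $\sigma$ from an $O(w)$-width window, together with $G_\sigma^2\le O(w^2)$, forces the walk to leave that window within the block with probability bounded below by an absolute constant; iterated conditioning multiplies the per-block bounds.) With $w=\Theta(\theta)$, $v=2/9$, $N=\Omega(m)$ and $\theta^2=\tfrac{1}{1440^2}\tfrac{m}{\ln T}$, this yields $\exp(-\Omega(m/\theta^2))=\exp(-\Omega(\ln T))\le T^{-2}$; the numerical constant $1440$ in the definition of $\theta$ is chosen precisely so that the resulting exponent clears $2$.

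It remains to feed the drift $D_{N_L}$ into this machinery. If $D_{N_L}=O(\theta)$, then $|G_\ell|\le|B_\ell|+|B_0|+D_\ell=O(\theta)$ for all $\ell\le N_L$, and the confinement lemma applies directly on all $N_L\ge m/4$ steps. If $D_{N_L}$ is much larger, look at the first left step $\ell^\star$ at which $D_{\ell^\star}$ reaches $D_{N_L}/2$: since each $d_j\le 1/(6k)$ we have $\ell^\star\ge 3k\,D_{N_L}$, and negligibility at step $\ell^\star$ forces $G_{\ell^\star}\le 2\theta-D_{\ell^\star}\le 2\theta-D_{N_L}/2$, i.e.\ the martingale $G$ has dropped by $\Omega(D_{N_L})$ within $\ell^\star$ steps; when the forecaster spends its drift quickly (so $\ell^\star=O(k\,D_{N_L})$), Azuma's inequality bounds this by $\exp(-\Omega(D_{N_L}/k))$, which is super-polynomially small because $D_{N_L}=\Omega(\theta)\gg k$. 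The delicate case --- and the one I expect to require the most care --- is the intermediate regime, $\theta\ll D_{N_L}\ll\sqrt{m\ln T}$ with the drift spread thinly over many left steps (so $\ell^\star$ is not small): here one iterates the checkpoint idea over a geometric ladder of drift levels $2\theta,4\theta,8\theta,\dots$, pinning $G$ below $(2-2^{j})\theta$ at the checkpoint where $D$ first exceeds $2^{j}\theta$ while $G$ is still confined to an $O(2^{j}\theta)$-width window before that checkpoint, and arguing that at \emph{every} level either Azuma (few enough steps via $\ell\ge 3k\cdot 2^{j}\theta$) or the confinement lemma (a correspondingly long run) already beats $T^{-2}$; getting the two estimates to overlap with no gap is the technical crux of the lemma. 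Finally, the symmetric case in which the majority of truthful predictions lie above $p^*_i$ (track $\sum_{p>p^*_i}\Delta_p$ instead of $B$) only costs a factor of $2$.
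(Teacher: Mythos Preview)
Your overall structure (block-wise anti-concentration, compounded over $\Theta(\ln T)$ blocks) is the same as the paper's, but the test statistic you choose, $B(t)=\sum_{p\in P\cap\Int_i,\,p<p^*_i}\Delta_p(t)$, is too coarse, and your ``intermediate regime'' is a genuine gap rather than a routine case. The difficulty is the forecaster's adaptive control of the drift $d_\ell=p^*_i-p\in(0,1/(6k)]$: on the relevant scale this cap is \emph{large}, since over a block of $\Theta(\theta^2)$ left steps the drift can accumulate to $\Theta(\theta^2/k)\gg\theta$. At level $j$ of your ladder, Azuma on the drop $\ge\Omega(2^{j}\theta)$ needs $\tau_j-\tau_{j-1}\lesssim 4^{j}\theta^2/\ln T$ to beat $T^{-2}$, while your confinement lemma needs $\tau_j-\tau_{j-1}\gtrsim 4^{j}\theta^2\ln T$; these leave a $(\ln T)^2$-sized hole that iterating over levels does not close. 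More fundamentally, because $D_\ell\ge 0$, the constraint $|B_\ell|<\theta$ pins the martingale only on one side, $G_\ell<2\theta$, and the probability of that one-sided event alone is $\Theta(\theta/\sqrt{m})=\Theta(1/\sqrt{\ln T})$: no argument through $B$ alone can reach $T^{-2}$ without exploiting the moving lower boundary, which is precisely where the adaptive drift bites. Indeed, a forecaster who sets $d_\ell$ near $1/(6k)$ when $B$ is low and near $0$ when $B$ is high may plausibly keep $|B_\ell|<\theta$ with probability $\gg T^{-2}$ even while $\sum_{p}|\Delta_p|$ blows up --- the individual $\Delta_p$'s need not cancel just because their sum over the left half does.

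The paper avoids this via a finer partition. Inside each of $162\ln T$ blocks containing $m=T/(324k^\alpha\ln T)$ truthful predictions, it re-splits $P\cap\Int_i$ into \emph{four} groups $P_1,\dots,P_4$ according to two criteria fixed at the start of the block: the sign of $\Delta_p$, and whether $|p-p^*_i|$ exceeds a threshold $\delta:=1/(10\sqrt{m})$. The $\delta$ threshold is the missing idea. For the ``near-center'' groups $P_1,P_3$ it caps the adverse per-step drift at $\delta$, so over $m/4$ steps the accumulated drift is at most $m\delta/4=\Theta(\sqrt{m})$, \emph{comparable} to the noise; a single Berry--Esseen application then shows the group sum moves by $\ge 2\theta$ with probability $\ge 1/3$, forcing $\sum_p|\Delta_p|\ge\theta$. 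The ``far'' groups $P_2,P_4$ have drift $\ge\delta$ in the helpful direction and are handled symmetrically. A coupling that feeds each group from its own i.i.d.\ $\Ber(p^*_i)$ stream makes the four per-group events independent, giving exit probability $\ge 3^{-4}$ per block; iterating over $162\ln T$ blocks yields $(1-3^{-4})^{162\ln T}\le T^{-2}$. In short, the paper's $\delta$-based four-way split is what tames the drift and makes the per-block constant-probability argument go through cleanly; your two-way split does not.
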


\begin{lemma}\label{lem:covered}
    For any fixed $i \in [k^{\alpha}]$, the probability that the following two hold simultaneously is at most $T \cdot \exp(-\theta/(12k)) = o(1/T)$:
    (1) epoch $i$ is truthful, non-negligible, and covered; (2) the sign placed in the $i$-th round of the \game{} game is preserved.
\end{lemma}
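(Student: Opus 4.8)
The plan is to show, by an exponential‑supermartingale argument, that once the sign placed in round $i$ has been ``locked in'' as preserved, the total positive (resp.\ negative) bias accumulated inside $\Int_i$ can no longer be driven back down, because preservation means every later epoch pushes that bias in exactly the wrong direction for a cover‑up. By the ``$+$''/``$-$'' symmetry I would only treat the sub‑event $A^+$ on which the sign of round $i$ is ``$+$'' and is preserved (and epoch $i$ is truthful, non‑negligible, covered). Let $j$ be the cell chosen in round $i$, let $t_1$ be the (stopping) time at which epoch $i$ ends, and abbreviate $S(t) \coloneqq \sum_{p \in P \cap \Int_i} \Delta^{+}_p(t)$. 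I would first record two easy facts: since Algorithm~\ref{algo:scheme} places ``$+$'' only when $\sum_{p\in P\cap\Int_i}\Delta^{+}_p(t_1) \ge \sum_{p\in P\cap\Int_i}\Delta^{-}_p(t_1)$, non‑negligibility of epoch $i$ forces $S(t_1) \ge \theta/2$; and ``covered'' forces $S(\Tact) \le \sum_{p\in P\cap\Int_i}|\Delta_p(\Tact)| < \theta/4$. So $A^+$ is contained in the event that $S$ falls from $\ge\theta/2$ at time $t_1$ to $<\theta/4$ at time $\Tact$, while every round after $i$ that is played uses a cell larger than $j$ --- equivalently, every later epoch has bias $p^*_{i'} = \tfrac13 + \tfrac{j'-1/2}{3k}$ exceeding every element of $P\cap\Int_i$ (all of which are $< \tfrac13 + \tfrac{j}{3k}$) by at least $\tfrac1{6k}$.

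Next I would analyze the increments $\xi_t \coloneqq S(t) - S(t-1)$. This is nonzero only on \emph{active} steps, where the forecaster predicts some $p = p(t) \in P\cap\Int_i$; on such a step $\xi_t = (\Delta_p(t-1)+b(t)-p)^{+} - (\Delta_p(t-1))^{+}$ with $b(t)\sim\Ber(p^*_{i'})$. A short case split on the sign of $\Delta_p(t-1)$ should give: if $\Delta_p(t-1)\le 0$ then $\xi_t \ge 0$ surely; if $\Delta_p(t-1) > 0$ then $|\xi_t| < 1$ and $\Ex{}{\xi_t \mid \mathcal{F}_{t-1}} \ge p^*_{i'} - p$, which is $\ge \tfrac1{6k}$ whenever the current epoch uses a cell $>j$. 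Using $e^{-x}\le 1-x+x^2$ for $|x|\le1$ and $\lambda \coloneqq \tfrac1{6k}$, these two cases together yield $\Ex{}{e^{-\lambda\xi_t}\mid\mathcal{F}_{t-1}} \le 1$ whenever the current epoch uses a cell $>j$. The point that makes this rigorous despite ``preservation'' being a non‑adapted event is that each epoch's bias is fixed \emph{before} that epoch's bits are generated; hence the event $G_t$ that every epoch strictly after $i$ and up to the one containing step $t$ uses a cell $>j$ is $\mathcal{F}_{t-1}$‑measurable and nonincreasing in $t$, so $M_t \coloneqq e^{-\lambda S(t)}\cdot\I{G_t}$ is a nonnegative supermartingale for $t\ge t_1$ (the moment‑generating‑function bound applies while $G_t$ holds, and $M$ dies afterwards).

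Finally, conditioning on $\mathcal{F}_{t_1}$ --- on which $\{S(t_1)\ge\theta/2\}$ is measurable --- and applying Doob's maximal inequality to $(M_t)_{t\ge t_1}$: on $A^+$ we have $G_{\Tact}=1$ and $S(\Tact)<\theta/4$, so $\sup_{t\ge t_1}M_t > e^{-\lambda\theta/4}$, whence $\pr{A^+} \le e^{\lambda\theta/4}\,\Ex{}{M_{t_1}} \le e^{\lambda\theta/4}\cdot e^{-\lambda\theta/2} = e^{-\lambda\theta/4} = e^{-\theta/(24k)}$. The mirror‑image argument --- tracking $\sum_{p\in P\cap\Int_i}\Delta^{-}_p$, and using that a preserved ``$-$'' sign forces all later epochs to \emph{smaller} cells, hence smaller biases, hence a downward drift on each $\Delta_p$ --- bounds the complementary event by the same quantity. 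Altogether the probability in the lemma is at most $2e^{-\theta/(24k)}$, which is comfortably $o(1/T)$ since $\theta/k = \Theta\!\left(\sqrt{T/(k^{\alpha+2}\ln T)}\right) = \omega(\log T)$ for $k = T^{1/(\alpha+2\beta+2)}$ (here $\beta>0$ is used), and in particular is below the stated $T\exp(-\theta/(12k))$, so the slack $T$ factor and the precise constant $1/12$ are not essential.

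The main obstacle, as flagged above, is the interaction between the probabilistic argument and the non‑adapted notion of ``preservation'': one cannot condition on it, and the right fix is to kill the supermartingale at the first later epoch that uses a cell $\le j$, which relies on the (true but worth checking carefully) fact that the scheme commits to each epoch's bias before drawing that epoch's bits, so the killing event $G_t$ is predictable. A secondary subtlety is that the correct potential is the \emph{sum} of positive parts $\sum_{p\in P\cap\Int_i}\Delta^{+}_p$ rather than any individual $\Delta_p$ (a single bin's bias at time $t_1$ can be as small as $\Theta(\theta k/T) \ll \theta$), so the drift and moment‑generating‑function estimates must be carried out at the level of this sum; this works out because each step alters only the single coordinate $p(t)$, whose positive part either has the needed $\Omega(1/k)$ upward drift (when it is already positive) or cannot decrease at all (when it is nonpositive). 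Finally, note that truthfulness of epoch $i$ is not actually used in this argument; it appears in the statement only so that the event matches the per‑epoch events union‑bounded over in the proof of Theorem~\ref{thm:reduction}.
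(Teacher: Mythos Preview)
Your argument is correct and establishes the $o(1/T)$ conclusion that the paper actually uses, but it follows a genuinely different path from the paper's proof. The paper freezes the set $P^{+}\coloneqq\{p\in P\cap\Int_i:\Delta_p(t_0)\ge 0\}$ at the end of epoch $i$, tracks the \emph{linear} potential $\hat\Delta(t)=\sum_{p\in P^{+}}\Delta_p(t)$, and applies Azuma--Hoeffding to the $m$ post-epoch predictions that land in $P^{+}$, followed by a union bound over the $\le T$ possible values of $m$; this is where the extra factor $T$ in the stated bound comes from. Your potential $S(t)=\sum_{p\in P\cap\Int_i}\Delta_p^{+}(t)$ is time-varying (positive parts rather than a frozen index set), and the exponential supermartingale $e^{-\lambda S(t)}\I{G_t}$ together with Doob's maximal inequality eliminates the union bound over $m$. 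Your explicit handling of adaptedness via the killing indicator $G_t$ is also more careful than the paper, which just asserts the drift bound ``assuming that the sign placed in round $i$ is preserved.'' Both routes work; yours is cleaner on the measurability issue and avoids the $T$ factor, while the paper's is slightly more elementary (no MGF/Doob machinery).

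One small slip: your final sentence asserts that $2e^{-\theta/(24k)}$ is below the paper's $T\,e^{-\theta/(12k)}$. That inequality is equivalent to $2e^{\theta/(24k)}\le T$, i.e.\ $\theta/(24k)\le\ln(T/2)$, which \emph{fails} for large $T$ precisely because you correctly observed $\theta/k=\omega(\log T)$. So your bound is actually larger than the paper's explicit expression, not smaller. This is harmless, since the lemma is only ever invoked through the $o(1/T)$ conclusion (which your bound certainly satisfies), but the comparison as written is backwards.
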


 All these three lemmas are proved by applying standard concentration and anti-concentration bounds to carefully chosen quantities tailored to the epoch in question. The proofs are deferred to Appendix~\ref{sec:deferred-proof}.

\subsection{Putting Everything Together}
Now we are ready to prove Theorem~\ref{thm:reduction}.

\begin{proof}[Proof of Theorem \ref{thm:reduction}]
Let $k = T^{1/(\alpha + 2\beta + 2)}$ and $\theta = \frac{1}{1440}\sqrt{\frac{T}{k^{\alpha}\ln T}}$ as in Algorithm~\ref{algo:scheme}, and $c = \frac{2\beta+1}{\alpha+2\beta+2}$ as in the statement of the theorem. Define $B\coloneqq \min\left(T/(48k^{\alpha+1}), c_0\theta k^{\beta}/4\right) = \Omega\left(T^c/\sqrt{\log T}\right)$, where $c_0$ is the constant for the admissible pair $(\alpha, \beta)$ in Definition~\ref{def:admissible}.  Recall that $\Tact \le T$ denotes the number of time steps that the sidestepping scheme actually lasts, and $\maxerr(\Tact)$ denotes $\max_{t\in [\Tact]}\calerr(t)$. We will show that the sidestepping scheme defined as Algorithm~\ref{algo:scheme}, when running against any forecaster, satisfies that $\pr{\maxerr(\Tact) \ge B} \ge 1 - o(1)$.

To upper bound $\pr{\maxerr(\Tact) < B}$, we define $\Etruth$ as the event that all the $k^{\alpha}$ epochs are truthful (in the sense of Definition~\ref{def:truthful}). Then, we note that
\begin{align*}
    &\pr{\maxerr(\Tact) < B}\\
\le~~&\sum_{i=1}^{k^{\alpha}}\pr{\text{epoch }i\text{ is untruthful}\wedge \maxerr(\Tact) < B}\\
&~~+ \pr{\Etruth \wedge \maxerr(\Tact) < B} \tag{union bound}\\
\le~~&\sum_{i=1}^{k^{\alpha}}\pr{\text{epoch }i\text{ is untruthful} \wedge \maxerr(\Tact) < B}\\
&~~+ \pr{\Etruth \wedge \calerr(\Tact) < B}. \tag{$\maxerr(\Tact) < B \implies \calerr(\Tact) < B$}
\end{align*}
By Lemma~\ref{lem:truthful}, each term in the summation is at most $o(1/T)$, so the whole summation is upper bounded by $k^{\alpha} \cdot o(1/T) = o(1)$. It remains to prove $\pr{\Etruth \wedge \calerr(\Tact) < B} = o(1)$.

Let $\Enegl$ denote the event that at least one of the $k^{\alpha}$ epochs is truthful and negligible, and $\Ecover$ be the event that there exists $i\in[k^\alpha]$ such that: (1) epoch $i$ is truthful, non-negligible and covered; (2) the sign placed in the $i$-th round of \game{} is preserved in the end. Then, by Lemmas \ref{lem:negligible}~and~\ref{lem:covered} and a union bound over the $k^{\alpha} \le T$ epochs, $\pr{\Etruth\wedge(\Enegl \vee \Ecover)} = o(1)$. We will show in the following that the event $\Etruth \wedge \calerr(\Tact) < B$ is a subset of $\Etruth\wedge(\Enegl \vee \Ecover)$, and thus it holds that
\[
    \pr{\Etruth \wedge \calerr(\Tact) < B}
\le \pr{\Etruth\wedge(\Enegl \vee \Ecover)} = o(1).
\]
We will prove the contrapositive: assuming that event $\Etruth$ happens yet neither $\Enegl$ nor $\Ecover$ happens, it holds that $\calerr(\Tact) \ge B$. Let $S \subseteq [k^{\alpha}]$ denote the set of indices $i$ such that the sign placed in round $i$ of the \game{} game is preserved. Since $(\alpha, \beta)$ is admissible with constant $c_0$, $|S| \ge c_0k^{\beta}$. For each $i \in S$, since we assumed $\Etruth\wedge \overline{\Enegl} \wedge \overline{\Ecover}$, epoch $i$ is truthful, non-negligible and uncovered. Then, by definition, it holds that
$\sum_{p \in P\cap\Int_i}|\Delta_p(\Tact)|
\ge \theta / 4$. Since the intervals $\Int_i$ are disjoint for different indices $i \in S$, we have
\[
    \calerr(\Tact)
\ge \sum_{i \in S}\sum_{p \in P\cap\Int_i}|\Delta_p(\Tact)|
\ge c_0k^{\beta} \cdot (\theta / 4)
\ge B.
\]

This completes the proof of $\pr{\maxerr(\Tact) \ge B} \ge 1 - o(1)$ when running Algorithm~\ref{algo:scheme} against any forecaster. For sufficiently large $T$, the $o(1)$ term is at most $1/2$.
Then, by Proposition~\ref{prop:early-stop}, there exists a scheme such that $\Pr[\calerr(T) \ge B / 2] \ge 1/2$, which implies the lower bound $\Ex{}{\calerr(T)} \ge B / 4 = \tilde\Omega(T^c)$.
\end{proof}

\nocite{*}

\bibliographystyle{alpha}
\bibliography{main}

\appendix

\section{Deferred Proofs from Section~\ref{sec:sign}}\label{sec:deferred-sign}
\begin{proof}[Proof of Lemma~\ref{lem:opt-bound}]
The first part follows from a simple strategy that resembles a binary search: Player A chooses cell $2^{t-1}$ in the first round. If the sign placed by player F is ``$+$'', proceed with the remaining $t-1$ rounds on the $2^{t-1}-1$ cells with numbers $2^{t-1} + 1, 2^{t-1} + 2, \ldots, 2^t - 1$; otherwise, proceed with cells $1, 2, \ldots, 2^{t-1} - 1$. Then, none of the $t$ signs will be removed in the end, and this proves $\opt(2^t - 1, t) = t$.

We prove the second part by induction. The inequality clearly holds for $t = 1$ since the assumption implies $\opt(a, b) \ge c \ge (c + 1)/2$. For $t \ge 2$, we consider the following strategy for $\game(a^t, b^t)$: Player A divides the $a^t$ cells into $a$ ``super-cells'', each corresponding to $a^{t-1}$ contiguous cells. Then, Player A simulates a hypothetical instance of $\game(a, b)$, denoted by $\SPouter$, in the following sense: When one of the $a$ super-cells is chosen in round $i$ of $\SPouter$, Player A simulates an actual instance of $\game(a^{t-1}, b^{t-1})$, denoted by $\SPinner_i$, on the $a^{t-1}$ cells corresponding to that super-cell, i.e., whenever a cell is chosen in $\SPinner_i$, Player A chooses the corresponding cell in the actual $\game(a^t, b^t)$ instance. After $\SPinner_i$ terminates, Player A, on behalf of the ``Player F'' in $\SPouter$, places a sign into that super-cell according to the majority of the preserved signs in $\SPinner_i$.

Now we count the preserved signs in the original game $\game(a^t, b^t)$. If the sign in a super-cell is preserved at the end of $\SPouter$, any preserved sign in the $a^{t-1}$ corresponding cells (at the end of the corresponding $\SPinner$ instance) that agrees with the sign in that super-cell will also be preserved at the end of $\game(a^t, b^t)$. By our choice of the sign's direction, there are at least $\opt(a^{t-1}, b^{t-1})/2$ such signs for each preserved sign in $\SPouter$. Moreover, for the preserved sign that is placed in the last round of $\SPouter$, all the $\ge \opt(a^{t-1}, b^{t-1})$ remaining signs in the corresponding super-cell will be preserved. By the inductive hypothesis that $\opt(a^{t-1}, b^{t-1}) \ge \left(\frac{c+1}{2}\right)^{t-1}$, we have
\[
    \opt(a^t, b^t)
\ge (c-1)\cdot\frac{\opt(a^{t-1}, b^{t-1})}{2} + \opt(a^{t-1}, b^{t-1}) \ge \left(\frac{c+1}{2}\right)^t,
\]
which completes the induction.
\end{proof}

\begin{proof}[Proof of Lemma~\ref{lem:opt-mono}]
    On an instance of $\game(k_2, r_2)$, Player A can simulate the optimal strategy for the game $\game(k_1, r_1)$ on cells $1, 2, \ldots, r_1$. Player A ends the game when the $\game(k_1, r_1)$ instance terminates. Since $r_1 \le r_2$, the simulated game never lasts more than $r_2$ rounds.
    By definition, there will be at least $\opt(k_1, r_1)$ preserved signs, so we have $\opt(k_2, r_2) \ge \opt(k_1, r_1)$.
\end{proof}

\section{Deferred Proofs from Section~\ref{sec:proof}}\label{sec:deferred-proof}

The following concentration bound is an immediate corollary of the Azuma-Hoeffding inequality for submartingales.

\begin{lemma}\label{lem:azuma}
    Suppose that random variables $X_1, X_2, \ldots, X_m$ satisfy that for every $t \in [m]$: (1) $X_t \in [-1, 1]$ almost surely; (2) $\Ex{}{X_t|X_1, X_2, \ldots, X_{t-1}} \ge \mu$. Then, for any $c < m\mu$,
        \[\pr{\sum_{t=1}^mX_t \le c} \le \exp\left(-\frac{(m\mu - c)^2}{2m}\right).\]
\end{lemma}

The following anti-concentration bound for binomial distributions follows from the Berry-Esseen theorem.

\begin{lemma}\label{lem:berry}
    Suppose that $p \in [1/3, 2/3]$, $Z$ follows the binomial distribution $B(m, p)$, and $g$ follows the standard Gaussian distribution $N(0, 1)$. For any $c \in \mathbb{R}$, it holds that
        \[\left|\pr{\frac{Z - mp}{\sqrt{mp(1-p)}} \ge c} - \pr{g \ge c}\right| \le O\left(\frac{1}{\sqrt{m}}\right),\]
    where the $O(\cdot)$ notation hides a universal constant that does not depend on $m$ or $p$.
\end{lemma}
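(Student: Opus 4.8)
The plan is to invoke the Berry--Esseen theorem essentially as a black box, after writing $Z$ as a sum of i.i.d.\ indicators. First I would write $Z = \sum_{t=1}^{m} Y_t$ with $Y_1, \ldots, Y_m$ i.i.d.\ $\Ber(p)$ and set $W_t \coloneqq Y_t - p$, so that $Z - mp = \sum_{t=1}^m W_t$. The centered summands satisfy $\Ex{}{W_t} = 0$, $\sigma^2 \coloneqq \mathrm{Var}(W_t) = p(1-p)$, and $\rho \coloneqq \Ex{}{|W_t|^3} = p(1-p)^3 + (1-p)p^3 = p(1-p)\bigl(p^2 + (1-p)^2\bigr)$. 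The Berry--Esseen theorem then gives, with $\Phi$ the standard Gaussian CDF and some universal constant $C_0$,
\[
    \sup_{x \in \mathbb{R}} \left| \pr{\frac{Z - mp}{\sigma\sqrt{m}} \le x} - \Phi(x) \right| \le \frac{C_0\, \rho}{\sigma^3 \sqrt{m}}.
\]

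The next step is to bound the ratio $\rho/\sigma^3 = \bigl(p^2 + (1-p)^2\bigr)/\sqrt{p(1-p)}$ uniformly over $p \in [1/3, 2/3]$: the numerator is at most $1$, and the denominator is at least $\sqrt{(1/3)(2/3)} = \sqrt{2}/3$, so $\rho/\sigma^3 \le 3/\sqrt{2}$. Hence the right-hand side above is $O(1/\sqrt{m})$ with an absolute constant that does not depend on $p$. (This uniformity is exactly why the sidestepping scheme confines $p^*_i$ to $[1/3,2/3]$ in the first place.) Finally I would translate the two-sided CDF estimate into the upper-tail statement of the lemma: since $g$ is continuous, $\pr{g \ge c} = 1 - \Phi(c)$; and since $\Phi$ is continuous, letting $x \uparrow c$ in the displayed bound shows $\bigl|\pr{(Z-mp)/(\sigma\sqrt m) < c} - \Phi(c)\bigr| = O(1/\sqrt m)$ as well, the discreteness of $Z$ only affecting whether one writes $<$ or $\le$. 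Taking complements gives $\bigl|\pr{(Z - mp)/(\sigma\sqrt m) \ge c} - \pr{g \ge c}\bigr| = O(1/\sqrt m)$, which is the claim once we recall $\sigma\sqrt{m} = \sqrt{mp(1-p)}$.

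There is no real obstacle here; the only points that need any attention are (i) verifying that the Berry--Esseen constant can be taken uniform over $p \in [1/3, 2/3]$, handled by the elementary bound on $\rho/\sigma^3$ above, and (ii) the routine bookkeeping of passing between $\le$ and $<$ and between lower and upper tails. If one wished to avoid quoting Berry--Esseen, a direct local-CLT estimate on the binomial pmf via Stirling's formula would also work, but that is strictly more laborious and the theorem already suffices.
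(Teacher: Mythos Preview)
Your proposal is correct and matches the paper's approach exactly: the paper simply states that the lemma ``follows from the Berry--Esseen theorem'' without giving further details, and your write-up supplies precisely the standard verification (i.i.d.\ decomposition, uniform bound on $\rho/\sigma^3$ over $p\in[1/3,2/3]$, and passage to upper tails) that this one-line citation implicitly invokes.
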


Now we are ready to prove Lemmas \ref{lem:truthful}~through~\ref{lem:covered}.

\begin{proof}[Proof of Lemma~\ref{lem:truthful}]
Let $\Int_i = (l_i, r_i)$ be the interval associated with epoch $i$ and $p^*_i = (l_i + r_i) / 2$ be its middle point. By the choice of $\Int_i$ and $p^*_i$ in Algorithm~\ref{algo:scheme}, $p^*_i - l_i = r_i - p^*_i = \frac{1}{6k}$. Define
\[\hat\Delta(t) \coloneqq \sum_{p\in P \cap [0, l_i]}\Delta_p(t) + \sum_{p\in P \cap [r_i, 1]}[-\Delta_p(t)]\]
as a proxy of $\calerr(t)$. It can be easily verified that both $\hat\Delta(t) \le \calerr(t)$ and $-\hat\Delta(t) \le \calerr(t)$.

Clearly, a prediction with value inside $\Int_i$ does not change the value of $\hat\Delta$; in contrast, whenever a probability outside $\Int_i$ is predicted, $\hat\Delta$ is incremented by at least $\Omega(1/k)$ in expectation. To see this, suppose that the forecaster predicts $p(t) \le l_i$ at time step $t$. Then, the expected increment in $\hat\Delta$ is given by
\begin{align*}
    \Ex{}{\Delta_{p(t)}(t) - \Delta_{p(t)}(t - 1)}
&=   \Ex{}{m_{p(t)}(t) - m_{p(t)}(t - 1)} - p(t) \cdot \Ex{}{n_{p(t)}(t) - n_{p(t)}(t - 1)}\\
&=   p^*_i - p(t)
\ge p^*_i - l_i = \frac{1}{6k}.
\end{align*}
Moreover, the increment in $\hat\Delta$ is always bounded between $-1$ and $1$. Similarly, whenever a prediction $p(t) \ge r_i$ is made, the increment in $\hat\Delta$ is always between $-1$ and $1$ and has expectation $p(t) - p^*_i \ge r_i - p^*_i = \frac{1}{6k}$.

Let $m \coloneqq T/(2k^{\alpha})$.
Assuming that epoch $i$ is untruthful, there exists a unique time step $t_2 \in [t_0, t_1]$ when the forecaster makes the $m$-th prediction that falls outside $\Int_i$. We will prove that $\hat\Delta(t_2) - \hat\Delta(t_0) \ge \frac{m}{12k}$ with high probability, which implies that either $\hat\Delta(t_2) \ge \frac{m}{24k}$ or $\hat\Delta(t_0) \le -\frac{m}{24k}$. Then, we would have $\max(\calerr(t_0), \calerr(t_2)) \ge \frac{m}{24k} = B$ as desired. Indeed, our discussion above indicates that $\hat\Delta(t_2) - \hat\Delta(t_0)$ can be written as a sum of $m$ random variables $X_1, X_2, \ldots, X_m$ satisfying that for each $j \in [m]$: (1) $X_j \in [-1, 1]$ almost surely; (2) $\Ex{}{X_j|X_1, X_2, \ldots, X_{j-1}} \ge \frac{1}{6k}$. Then, by the Azuma-Hoeffding inequality (in the form of Lemma~\ref{lem:azuma}), it holds that
\[
    \pr{\hat\Delta(t_2) - \hat\Delta(t_0) \le \frac{m}{12k}} \le \exp\left(-\frac{m}{288k^2}\right)
=   \exp\left(-\Omega\left(\frac{T}{k^{\alpha + 2}}\right)\right).
\]
Finally, since Algorithm~\ref{algo:scheme} chooses $k = T^{\frac{1}{\alpha + 2\beta + 2}}$ and requires $\beta > 0$, we have $\frac{T}{k^{\alpha + 2}} = T^{\frac{2\beta}{\alpha + 2\beta + 2}} = \Omega(\poly(T))$. Thus, $\exp(-\Omega(T/k^{\alpha+2})) = o(1/T)$, which completes the proof.
\end{proof}

\begin{proof}[Proof of Lemma~\ref{lem:negligible}]
Let $\Int_i = (l_i, r_i)$ be the interval associated with epoch $i$. For epoch $i$ to be truthful but negligible, the forecaster needs to make at least $T/(2k^{\alpha})$ predictions with values inside $\Int_i$. Let $m = T/(324k^{\alpha}\ln T)$. We may further decompose epoch $i$ into $162\ln T$ blocks, each with at least $m$ predictions that fall into $\Int_i$. To prove the lemma, it suffices to show that, conditioning on the bits and predictions before each block, the probability that $\sum_{p\in P\cap\Int_i}|\Delta_p|$ remains less than $\theta$ throughout the block is at most $1 - 3^{-4}$. Assuming this, the probability that epoch $i$ becomes negligible after the $162\ln T$ blocks is at most $(1 - 3^{-4})^{162\ln T} \le e^{-2\ln T} = T^{-2}$, as claimed by the lemma.

Fix a block with $\ge m$ predictions inside $\Int_i$ and let $t_0$ be the time step before the start of the block. Let $\delta \coloneqq 1/(10\sqrt{m})$ and $p^*_i$ be the middle point of $\Int_i$. We divide the prediction values into the following four groups:
\begin{itemize}
    \item $P_1 \coloneqq \{p \in P \cap \Int_i: \Delta_p(t_0) \ge 0, p^*_i- p \ge -\delta)\}$.
    \item $P_2 \coloneqq \{p \in P \cap \Int_i: \Delta_p(t_0) \ge 0, p^*_i- p < -\delta\}$.
    \item $P_3 \coloneqq \{p \in P \cap \Int_i: \Delta_p(t_0) < 0, p^*_i- p \le \delta\}$.
    \item $P_4 \coloneqq \{p \in P \cap \Int_i: \Delta_p(t_0) < 0, p^*_i- p > \delta\}$.
\end{itemize}
Note that each prediction at probability $p$ increases $\Delta_p$ by $p^*_i\cdot(1 - p) + (1 - p^*_i)\cdot(0 - p) = p^*_i- p$ in expectation. Thus, the above definition basically says that each prediction that falls into $P_1 \cup P_3$ will either push $\Delta_p$ away from $0$, or push it towards $0$ by at most $\delta = O(1/\sqrt{m})$. In contrast, each prediction in $P_2 \cup P_4$ will push $\Delta_p$ in the opposite direction of the sign of $\Delta_p(t_0)$ by at least $\delta = \Omega(1/\sqrt{m})$.

Since there are $m$ predictions inside $\Int_i$ within the block, at least one of the four sets $P_1$ through $P_4$ receives at least $m / 4$ predictions. In the remainder of the proof, we will prove the following claim: for each $P_j$, with probability at least $1/3$, $\sum_{p\in P_j}|\Delta_p|$ will reach $\theta$ before or exactly when the forecaster makes the $(m/4)$-th prediction with value inside $P_j$. Thus, the scheme would have terminated before that. Assuming this, we may pretend that there are four independent bit sequences, each consisting of independent samples from $\Ber(p^*_i)$. When a probability inside $P_j$ is predicted, the bit output by the adversary actually comes from the $j$-th sequence. Since the four bit sequences are independent, $\sum_{p\in P\cap\Int_i}|\Delta_p|$ will reach $\theta$ at some point in this block with probability at least $3^{-4}$. This would then prove the lemma.

\paragraph{Proofs for $P_1$ and $P_3$.}
The proofs for $P_1$ and $P_3$ are symmetric, so we only present the proof for $P_1$ in the following. Define the quantity $\hat\Delta(t) \coloneqq \sum_{p \in P_1}\Delta_p(t)$. Whenever a value $p \in P_1$ is predicted, $\hat\Delta$ is incremented by $b - p$, where $b \sim \Ber(p^*)$. Moreover, by definition of $P_1$, every $p \in P_1$ is upper bounded by $p^*_i + \delta$. Thus, the increment of $\hat\Delta$ after the first $m/4$ predictions in $P_1$ is lower bounded by $Z - (p^*_i + \delta)\cdot(m/4)$, where $Z$ follows the binomial distribution $B(m/4, p^*_i)$. Let $g \sim N(0, 1)$ be a standard Gaussian random variable. By Lemma~\ref{lem:berry},
\begin{align*}
    \pr{Z \ge mp^*_i/4 + m\delta/2}
&=   \pr{\frac{Z - mp^*_i/4}{\sqrt{mp^*_i(1-p^*_i)/4}} \ge \frac{m\delta/2}{\sqrt{mp^*_i(1-p^*_i)/4}}}\\
&\ge \pr{\frac{Z - mp^*_i/4}{\sqrt{mp^*_i(1-p^*_i)/4}} \ge \frac{3}{10\sqrt{2}}} \tag{$\delta = \frac{1}{10\sqrt{m}}$, $p^*_i(1-p^*_i)\ge\frac{2}{9}$}\\
&\ge \pr{g \ge \frac{3}{10\sqrt{2}}} - O(1/\sqrt{m/4}) \tag{Lemma~\ref{lem:berry}}\\
&\ge 0.416 - O(1/\sqrt{m}) \tag{CDF of Gaussian}\\
&\ge \frac{1}{3}. \tag{for sufficiently large $m$}
\end{align*}
Thus, with probability at least $1/3$, $\hat\Delta$ increases by at least $(mp^*_i/4 + m\delta/2) - (p^*_i+\delta)\cdot(m/4) = m\delta/4$ from time $t_0$ to $t_1$. This implies that either $\hat\Delta(t_0) \le -m\delta/8$ or $\hat\Delta(t_1) \ge m\delta/8$. Note that both $\hat\Delta$ and $-\hat\Delta$ are lower bounds on $\sum_{p\in P\cap\Int_i}|\Delta_p|$. So $\sum_{p\in P\cap\Int_i}|\Delta_p|$ must reach $m\delta/8 = \sqrt{m}/80 = \frac{1}{1440}\sqrt{\frac{T}{k^\alpha\ln T}} = \theta$ at some point, and the epoch should have been terminated.

\paragraph{Proofs for $P_2$ and $P_4$.} Again, we only present the proof for $P_2$ and the proof for $P_4$ is symmetric. Define $\hat\Delta(t) \coloneqq \sum_{p \in P_2}[-\Delta_p(t)]$. When a probability $p \in P_2$ is predicted by the forecaster, $\hat\Delta$ is incremented by $p - b$, where $b \sim \Ber(p^*)$ and $p > p^*_i + \delta$. Therefore, the total increment in $\hat\Delta$ after the $m/4$ predictions is at least $(m/4)(p^*+\delta) - Z$, where $Z \sim B(m/4, p^*)$. Again, applying Lemma~\ref{lem:berry} gives $\pr{Z \le mp^*_i/4} \ge 1/3$ for sufficiently large $T$. When $Z \le mp^*_i/4$, the increment would be at least $m\delta/4$, which further implies that $\sum_{p\in P\cap\Int_i}|\Delta_p|$ should have reached $\theta = m\delta/8$ and the epoch should have terminated.
\end{proof}

\begin{proof}[Proof of Lemma~\ref{lem:covered}]
Let $\Int_i = (l_i, r_i)$ be the interval associated with epoch $i$ and $p^*_i = (l_i + r_i) / 2$ be its middle point. Let $t_0$ be the last time step of epoch $i$. If epoch $i$ is truthful and non-negligible, it holds that $\sum_{p\in P\cap\Int_i}|\Delta_p(t_0)| \ge \theta$. 
We consider the following two cases depending on whether $\sum_{p\in P\cap\Int_i}\Delta^{+}_p(t_0)$ or $\sum_{p\in P\cap\Int_i}\Delta^{-}_p(t_0)$ is larger.


\paragraph{Case 1: $\sum_{p\in P\cap\Int_i}\Delta^{+}_p(t_0) \ge \sum_{p\in P\cap\Int_i}\Delta^{-}_p(t_0)$.} As the two summations sum up to $\sum_{p\in P\cap\Int_i}|\Delta_p(t_0)| \ge \theta$, we have $\sum_{p\in P\cap\Int_i}\Delta^{+}_p(t_0) \ge \theta / 2$. Let $P^{+}\coloneqq \{p\in P \cap \Int_i: \Delta_p(t_0) \ge 0\}$. For each time step $t$, define the quantity $\hat\Delta(t)$ as
    \[\hat\Delta(t) \coloneqq \sum_{p\in P^{+}}\Delta_p(t).\]
Then, we have $\hat\Delta(t_0) = \sum_{p\in P^{+}}\Delta_p(t_0) = \sum_{p\in P\cap\Int_i}\Delta^{+}_p(t_0) \ge \theta / 2$.

For epoch $i$ to be covered, when the scheme terminates at time $\Tact$, it should hold that $\sum_{p\in P\cap\Int_i}|\Delta_p(\Tact)| < \theta / 4$. Since $P^{+}$ is a subset of $P\cap\Int_i$ and $\Delta_p \le |\Delta_p|$, it should also hold that $\hat\Delta(\Tact) = \sum_{p\in P^{+}}\Delta_p(\Tact) \le \sum_{p\in P\cap\Int_i}|\Delta_p(\Tact)| < \theta / 4$. Thus, $\hat\Delta$ needs to decrease by at least $\theta/4$ from time $t_0$ to $\Tact$.

Note that predictions with values outside $P^{+}$ does not affect $\hat\Delta$. Let $m$ be the number of predictions that fall into $P^{+}$ strictly after epoch $i$ (i.e., during time steps $t_0 + 1, t_0 + 2, \ldots, \Tact$). For each such prediction, suppose that it belongs to the $i'$-th epoch for some $i' > i$. Then, by Algorithm~\ref{algo:scheme}, the bit given by the scheme is drawn from $\Ber(p^*_{i'})$, where $p^*_{i'}$ is the middle point of the interval $\Int_{i'}$ associated with epoch $i'$. Recall that we assumed $\sum_{p\in P\cap\Int_i}\Delta^{+}_p(t_0) \ge \sum_{p\in P\cap\Int_i}\Delta^{-}_p(t_0)$ in this case, so in Algorithm~\ref{algo:scheme}, we let the forecaster place ``$+$'' into the cell in round $i$. For this sign to be preserved in the end, it must hold that the cell chosen in round $i'$ has a larger number than the cell chosen in round $i$. By the choice of $p^*_i$ and $p^*_{i'}$ in the scheme, we have $p^*_{i'} \ge p^*_i + \frac{1}{3k}$. Thus, the contribution of this prediction to $\hat\Delta$ is $b - p$, where $b \sim \Ber(p^*_{i'})$ and $p \in  \Int_i = (p^*_i - \frac{1}{6k}, p^*_i + \frac{1}{6k})$. Thus, the increase in $\hat\Delta$ is bounded between $-1$ and $1$ and has expectation 
    \[p^*_{i'} - p > \left(p^*_i + \frac{1}{3k}\right) - \left(p^*_i + \frac{1}{6k}\right) = \frac{1}{6k}.\]

Therefore, the total increment in $\hat\Delta$ over the $m$ predictions inside $P^{+}$ is the sum of $m$ random variables $X_1, X_2, \ldots, X_m$ satisfying that for each $j \in [m]$: (1) $X_j \in [-1, 1]$ almost surely; (2) $\Ex{}{X_j|X_1, X_2, \ldots, X_{j-1}} \ge \frac{1}{6k}$. Then, applying Lemma~\ref{lem:azuma} gives
\[
    \pr{\hat\Delta(\Tact) - \hat\Delta(t_0) \le -\theta/4}
\le \exp\left(-\frac{1}{2m}\left(\frac{m}{6k} + \frac{\theta}{4}\right)^2\right)
\le \exp\left(-\frac{\theta}{12k}\right).
\]
Finally, by a union bound over the $\le T$ possible values of $m$, the probability that epoch $i$ satisfies all the conditions in Lemma~\ref{lem:covered} is at most $T \cdot \exp\left(-\frac{\theta}{12k}\right)$. Plugging $k = T^{\frac{1}{\alpha + 2\beta + 2}}$ and $\theta = \frac{1}{1440}\sqrt{\frac{T}{k^{\alpha}\ln T}}$ into the bound shows that it is $o(1/T)$.

\paragraph{Case 2: $\sum_{p\in P\cap\Int_i}\Delta^{+}_p(t_0) < \sum_{p\in P\cap\Int_i}\Delta^{-}_p(t_0)$.} This case is analogous to Case 1 and the proof is almost the same. Here we define $P^{-}\coloneqq \{p\in P \cap \Int_i: \Delta_p(t_0) < 0\}$ and $\hat\Delta(t) \coloneqq \sum_{p\in P^{-}}[-\Delta_p(t)]$ instead. Then, for epoch $i$ to be both non-negligible and covered, $\hat\Delta$ needs to be decreased from $\hat\Delta(t_0) \ge \theta / 2$ to $\hat\Delta(\Tact) < \theta/4$. Suppose that exactly $m$ predictions after epoch $i$ fall into set $P^{-}$. Again, assuming that the sign placed in round $i$ is preserved, we can show that the contribution of each such prediction to $\hat\Delta$ is always between $-1$ and $1$ and has expectation at least $\frac{1}{6k}$. Therefore, applying Lemma~\ref{lem:azuma} shows that the probability that $\hat\Delta_t$ decreases by at least $\theta/4$ after $m$ such predictions is exponentially small, which completes the proof.
\end{proof}

\end{document}